\def\eqref#1{equation~\ref{#1}}
\def\1{\bm{1}}
\DeclareMathAlphabet{\mathsfit}{\encodingdefault}{\sfdefault}{m}{sl}
\SetMathAlphabet{\mathsfit}{bold}{\encodingdefault}{\sfdefault}{bx}{n}
\newcommand{\R}{\mathbb{R}}
\newcommand{\softmax}{\mathrm{softmax}}
\DeclareMathOperator*{\argmin}{arg\,min}
\newtheorem{definition}{Definition}
\DeclareMathOperator{\dom}{dom}
\DeclareMathOperator{\cs}{CS}
\DeclareMathOperator{\id}{id}
\newcommand{\New}{\text{\tiny new}}
\newcommand{\e}{\bm{e}}
\newcommand{\x}{\mathbf{x}}
\newcommand{\y}{\bm{y}}
\newcommand{\A}{\bm{A}}
\newcommand{\I}{\bm{I}}
\newcommand{\T}{\bm{T}}
\newcommand{\M}{\bm{M}}
\newcommand{\one}{\bm{1}}
\newcommand{\zero}{\bm{0}}
\renewcommand{\u}{\bm{u}}
\renewcommand{\v}{\bm{v}}
\newcommand{\bb}{\bm{b}}
\newcommand{\Pb}{\bm{P}}
\newcommand{\Qb}{\bm{Q}}
\newcommand{\Rb}{\bm{R}}
\newcommand{\zb}{\bm{z}}
\newcommand{\Xc}{\mathcal{X}}
\newcommand{\Yc}{\mathcal{Y}}
\newcommand{\m}{\bm{m}}
\newcommand{\mt}{\tilde{\m}}
\renewcommand{\a}{\bm{a}}
\newcommand{\yh}{{\hat{\y}}}
\renewcommand{\c}{\bm{c}}
\newcommand{\ct}{\tilde{\c}}
\renewcommand{\A}{\bm{A}}
\newcommand{\V}{\bm{V}}
\renewcommand{\H}{\bm{H}}
\newcommand{\Q}{\bm{Q}}
\newcommand{\U}{\bm{U}}
\newcommand{\Vt}{\widetilde{\V}}
\newcommand{\St}{\text{St}}
\newcommand{\QR}{\text{\texttt{QR}}}
\newcommand{\GQR}{\text{\texttt{GQR}}}
\newcommand{\pref}[1]{Eq.~(\ref{#1})}
\newcommand{\xh}{\hat{\x}}
\begin{document}

\title{Layerwise Bregman Representation Learning\\with Applications to Knowledge Distillation}

\author{Ehsan Amid\\
{\tt\normalsize eamid@google.com}
\and
Rohan Anil\\
{\tt\normalsize rohananil@google.com}
\and
Christopher Fifty\\
{\tt\normalsize cfifty@google.com}
\and
Manfred K. Warmuth\\
{\tt\normalsize manfred@google.com}
}

\date{Google Research, Brain Team}

\maketitle

\begin{abstract}

In this work, we propose a novel approach for layerwise representation learning of a trained neural network. In particular, we form a Bregman divergence based on the layer's transfer function and construct an extension of the original Bregman PCA formulation by incorporating a mean vector and normalizing the principal directions with respect to the geometry of the local convex function around the mean. This generalization allows exporting the learned representation as a fixed layer with a non-linearity. As an application to knowledge distillation, we cast the learning problem for the student network as predicting the compression coefficients of the teacher's representations, which are passed as the input to the imported layer. Our empirical findings indicate that our approach is substantially more effective for transferring information between networks than typical teacher-student training using the teacher's penultimate layer representations and soft labels. 
  
\end{abstract}

\section{Introduction}

Principal Component Analysis (PCA)~\cite{FRSLIIIOL,shlens2014tutorial} is perhaps one of the most commonly used techniques for data compression, dimensionality reduction, and representation learning. In the simplest form, the PCA problem is defined as minimizing the \emph{compression loss} of representing a set of points as linear combinations of a set of orthonormal \emph{principal directions}. More concretely, given $\Xc = \{\x_i \in \R^d\}$ and $k \leq d$, the PCA problem can be formulated as finding the \emph{mean vector} $\m \in \R^d$ and principal directions $\V\in \R^{d\times k}$ where $\V^\top\V = \I_k$ such that the \emph{\mbox{compression} loss},
    \begin{equation}
    \label{eq:vanilla-pca-obj}
    \m, \V, \{\c_i\} = \!\!\!\!\!\!\!\!\!\argmin_{\Big\{\substack{\mt\in \R^d, \Vt \in \St_{d,k},\\ \{\ct_i \in \R^k\}}\Big\}}\!\!\sum_i \Vert \x_i - (\mt + \Vt\ct_i)\Vert^2
    \end{equation}
    is minimized. Here, $\St_{d,k} = \{\U\in \R^{d\times k}:\, \U^\top\U = \I_k\}$ denotes the Stiefel manifold of $k$-frames in $\R^d$~\cite{Hatcher} and $\c_i \in \R^k$ corresponds to the \emph{compression coefficients} of $\x_i$. The problem in~\pref{eq:vanilla-pca-obj} can be solved effectively in two steps. First, we note that $\m$ can be viewed as a constant shared representation for all points in $\Xc$ (for which the code length $k=0$) that minimizes the total compression loss. With this interpretation, the mean vector can be written as the minimizer of
    \begin{equation}
    \label{eq:mean-obj}
        \m = \argmin_{\mt\in\R^d}\sum_i \Vert \x_i - \mt\Vert^2\, ,
    \end{equation}
    for which, the solution corresponds to the geometric mean $\m = \frac{1}{\vert\Xc\vert} \sum_i \x_i$. By fixing $\m$, the solution for $\V$ and $\{\c_i\}$ can be obtained by enforcing the orthonormality constraints using a set of Lagrange multipliers and setting the derivatives to zero. The solution to $\V$ amounts to the the top-$k$ eigenvectors of the covariance matrix $\sfrac{1}{\vert\Xc\vert}\sum_i (\x_i - \m)(\x_i - \m)^\top$ and $\c_i = \V^\top (\x_i - \m)$ corresponds to the projection of the centered point onto the column space of $\V$. Note that the online variants of PCA, such as Oja's algorithm~\cite{oja1982simplified} alternatively apply a gradient step on $\V$ and project the update onto $\St_{d,k}$ by an application of QR decomposition~\cite{GoluVanl96}.

    Knowledge distillation refers to a set of techniques used for transferring information from typically a larger trained model, called the \emph{teacher}, to a smaller model, called the \emph{student}~\cite{HinVin15Distilling,anil2018large,xie2020self}. The goal of distillation is to improve the performance of the student model by augmenting the knowledge learned by the larger model with the raw information provided by the set of train examples. Since its introduction, various approaches have applied knowledge distillation to obtain improved results for language modeling~\cite{sanh2019distilbert}, image classification~\cite{beyer2021knowledge}, and robustness against adversarial attacks~\cite{Papernot2016DistillationAA}.

    The teacher's knowledge is typically encapsulated in the form of (expanded) soft labels, which are usually smoothened further by incorporating a temperature parameter at the output layer of the teacher model~\cite{HinVin15Distilling,muller2020subclass}. Other approaches consider matching the teacher's representations, typically in the penultimate layer, for a given input by the student~\cite{romero2014fitnets}. In this paper, we explore the idea of directly transferring information from a teacher to a student in the form of ~\emph{learned (fixed) principal directions} in arbitrary layers of the teacher model. Our focus for representation learning will be on a generalized form of the PCA method based on the broader class of Bregman divergences. The Bregman divergence~\cite{bregman} induced by the strictly-convex and differentiable convex function $G: \R^d \rightarrow \R$ between $\u, \v \in \dom G $ is defined as 
    \begin{equation}
    \label{eq:bregman}
        D_G(\u, \v) = G(\u) - G(\v) - g(\v) \cdot (\u - \v)\, ,
    \end{equation}
    where $g = \nabla G$ is the gradient function (commonly known as the \emph{link} function). Bregman divergence is always non-negative and zero iff the arguments are equal. This broad class of divergences includes many well-known cases, such as the squared Euclidean and KL divergences as special cases. In addition, a Bregman divergence is not necessarily symmetric but satisfies a duality property in terms of the Bregman divergence between the dual variables. Let $G^*(\u^*) = \sup_{\zb} \zb\cdot\u^* - G(\zb)$ be the Legendre dual~\cite{urruty} of $G$. Then, we can write $D_G(\u, \v) = D_{G^*\!}(\v^*, \u^*)$ where $\u^* = g(\u)$ and $\v^* = g(\v)$ are the pair of dual variables. When $G$ is strictly convex and differentiable, we have $g^* = g^{-1}$ and $\u = g^*(\u^*)$ and $\v = g^*(\v^*)$. Also, the derivative of a Bregman divergence with respect to the first argument takes the following simple form
    \begin{equation}
    \label{eq:d-bregman}
        \nabla_{\u} D_G(\u, \v) = g(\u) - g(\v)\, .
    \end{equation}
\begin{figure*}[t!]
\vspace{-0.2cm}
\begin{center}
\subfigure[]{\includegraphics[width=0.35\textwidth]{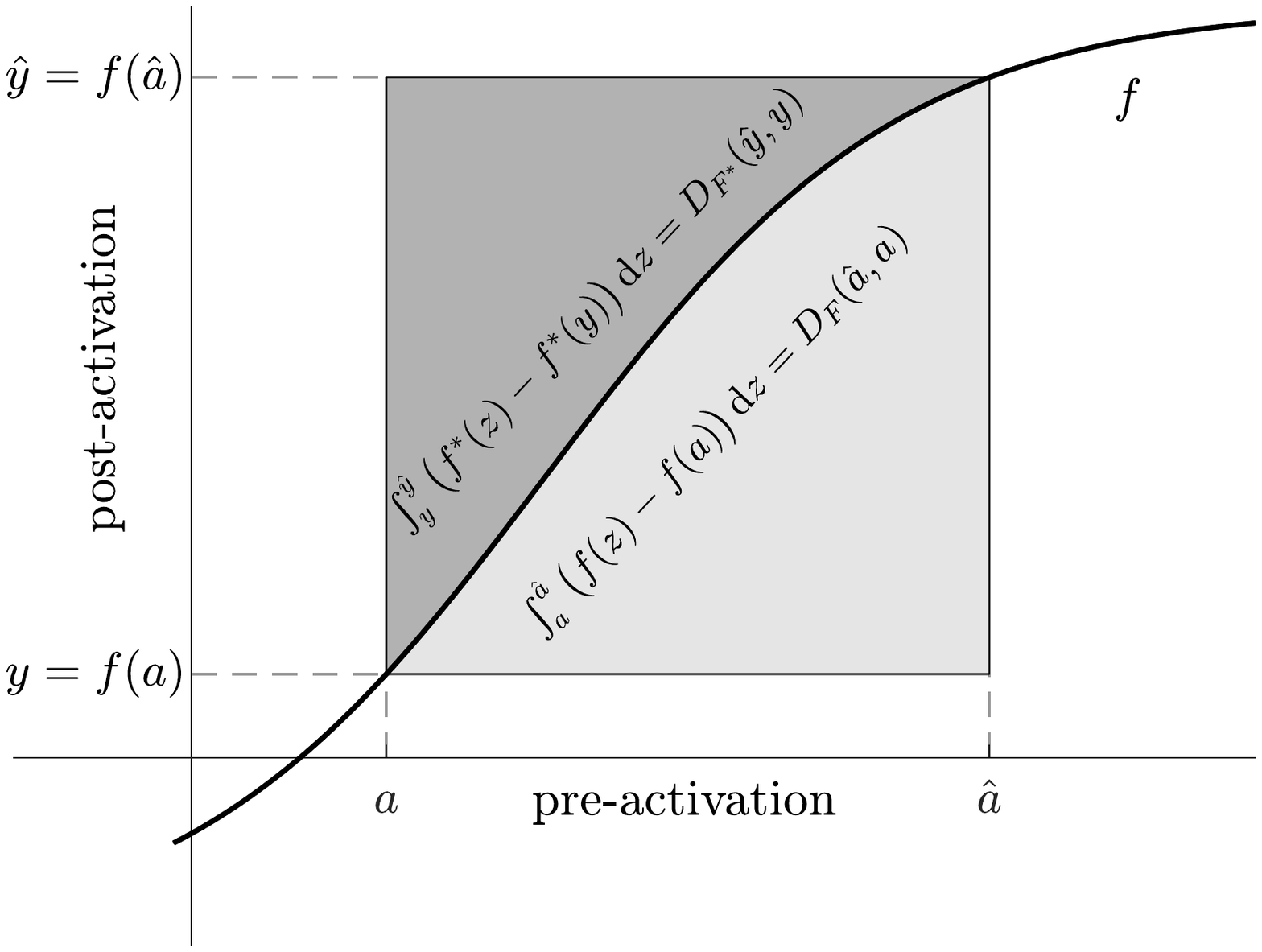}}
\subfigure[]{\includegraphics[width=0.64\textwidth]{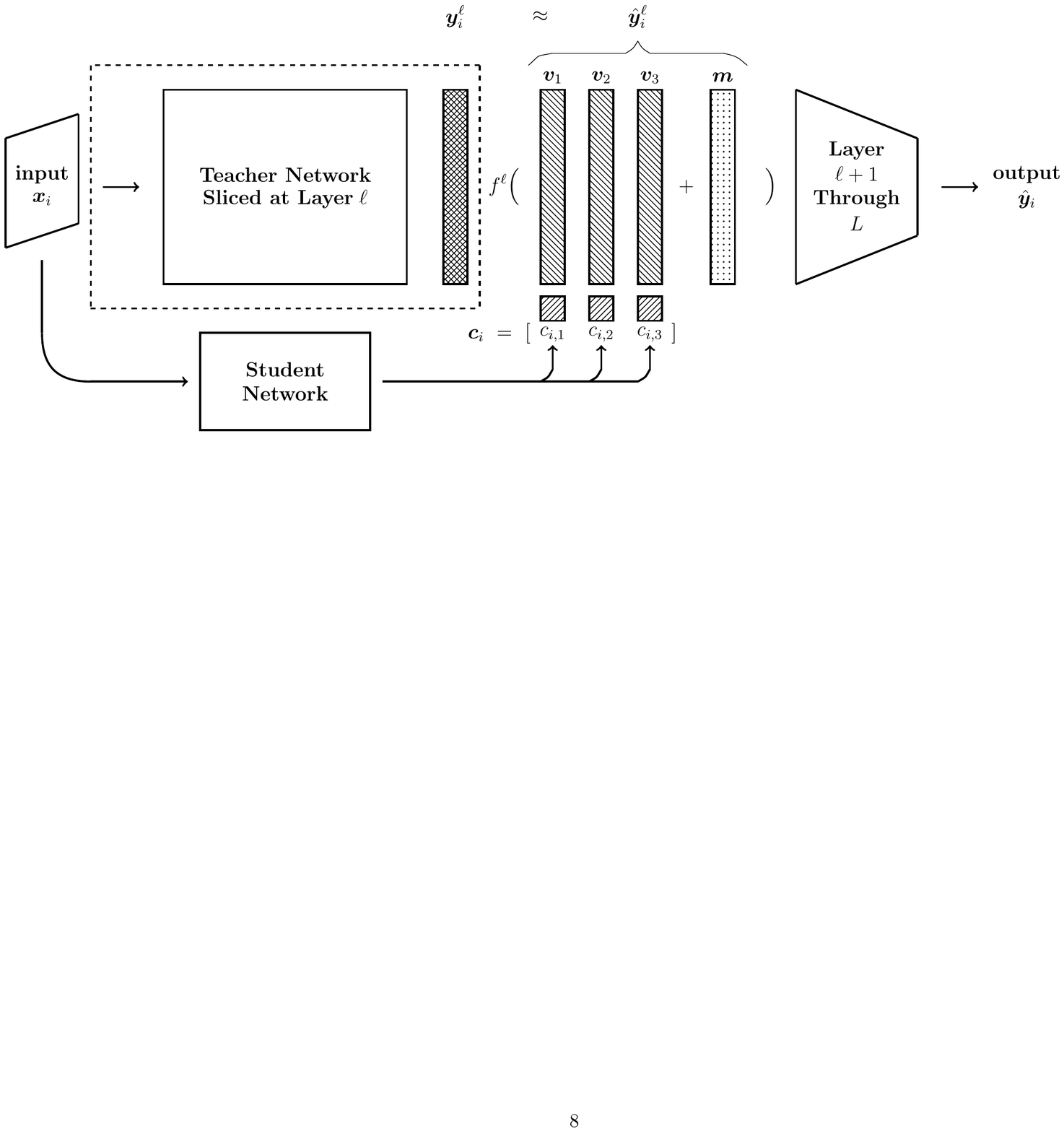}}
\vspace{-0.3cm}
\caption{(a) A dual view of Bregman divergences as area under the curves of the transfer function $f$ and its inverse $f^* = f^{-1}$~\cite{match,loco}. (b) An illustration of knowledge distillation using Bregman representation learning. Given an input example $\x_i$, the output representation $\y_i^{\ell}$ of the teacher network at layer $\ell \in L$ is approximated as $\hat{\y}_i^{\ell} = f^\ell(\V\c_i + \m)$ where the principal directions $\V$ and the mean vector $\m$ are learned using Bregman PCA on the full train set. The compression coefficients $\c_i$ are predicted by the student network, and the approximate representation $\hat{\y}_i^{\ell}$ is passed instead of $\y_i^{\ell}$ through the rest of the network from layer $\ell+1$ to $L$. The teacher component inside the dashed bounding box can be discarded once the student network is trained.}\label{fig:network}
\end{center}
\vspace{-0.2cm}
\end{figure*}
The loss construction in~\cite{loco} provides a natural way of generating layerwise Bregman divergences for deep neural networks as line integrals of the strictly monotonic transfer functions. A dual view of such Bregman divergences in one dimension is illustrated in Figure~\ref{fig:network}(a) as areas under the curves of the strictly monotonic transfer function $f$ and its inverse. We will utilize such Bregman divergences for layerwise representation learning via an extension of the Bregman PCA algorithm. Note that the generalization of PCA in~\pref{eq:vanilla-pca-obj} to Bregman divergences was done in~\cite{bpca} and extended in several previous works~\cite{roy2002exponential,ppca}. However, our generalization differs largely from the previous approaches in terms of the following:
    \begin{itemize}[noitemsep,topsep=0pt]
        \item We extend the formulation to include a generalized mean vector to handle the non-centered data. To the best of our knowledge, our work is the first formulation to tackle this case.
        \item We improve the orthonormality constraint of the Euclidean geometry to conjugacy in terms of the Riemannian metric~\cite{lee2006riemannian} induced by the Bregman divergence. Our extended formulation of mean and the conjugacy constraint recovers the vanilla PCA problem in~\pref{eq:vanilla-pca-obj} in the Euclidean case.
        \item We introduce a simple variant of QR decomposition to enforce the generalized conjugacy constraint efficiently. We also provide a simple trick to handle the case of constrained conjugate directions for the softmax link function.
        \item We provide an application of our construction for layerwise representation learning of deep neural networks with strictly monotonic transfer functions. Our work builds on the layerwise loss construction approach proposed in~\cite{loco}. We show that even low-rank approximations of the input examples maintain the representativeness of the original network.
        \item Finally, we propose a new approach for knowledge distillation by importing the fixed principal directions learned from the teacher model into the student model.
    \end{itemize}
    Our extension of the Bregman PCA formulation allows viewing the learned representation as a fixed layer. Thus, the distillation problem reduces to learning the corresponding compression coefficients of a given example by the student, which is fed as input to this fixed layer. Our knowledge distillation approach using Bregman representation learning is summarized in Figure~\ref{fig:network}(b).

\subsection{Related Work}

\textbf{Generalized PCA:}\, Collins et al. \cite{bpca} propose a generalization of the original PCA problem, defined for Gaussian models, to general exponential family distributions. Several extensions of this work focus on applying the framework to Poisson distributions~\cite{ppca} and compositional data~\cite{coda}. In \cite{logistic-pca}, the authors apply logistic PCA for binary data to dimensionality reduction. Landgraf and Lee \cite{Landgraf2020GeneralizedPC} propose a majorization-minimization approach for solving the generalized PCA problem. Despite the success of earlier generalized PCA methods, a direct generalization of the original PCA formulation in~\pref{eq:vanilla-pca-obj} with a mean vector, as well as a normalization that considers the local geometry of the space around the mean, is lacking.

\noindent\textbf{Knowledge Distillation:}\, There have been several lines of work that go beyond extracting knowledge from the model's predictive distribution. These include the work by \cite{romero2014fitnets} where they add a regression loss between the representation of the teacher and a thinner (in the number of units) student representation. In their work, they learn a projection matrix to match the entire representation of the teacher rather than the relevant ones that are useful for the task. They also require the student to be thinner and deeper, which can result in difficulty in training. This line of work has been improved by \cite{zagoruyko2016paying} where the students are less deep and make use of attention maps from the teacher model from a modern convolutional network which performs well on the problem. Additionally, Czarnecki et al.~\cite{czarnecki2017sobolev} propose incorporating the derivative of the teacher's prediction w.r.t to the input to be matched by the student, and this additional information has been shown to improve the distillation procedure. More recently, Tian et al.~\cite{tian2019contrastive} propose adding contrastive losses to transfer representation between two networks and show improvement over vanilla distillation. However, their procedure requires extra computation to push apart the teacher's representation of randomly sampled inputs from the student's representation of actual inputs. Finally, M\"{u}ller et al.~\cite{muller2020subclass} tackle the task of extracting more information from the teacher model when the number of classes is small. They propose creating sub-classes corresponding to every class, followed by soft-label training for distillation.

\section{Extended Bregman PCA}
Given a strictly convex and differentiable function $F: \R^d \rightarrow \R$ with link function $f = \nabla F$, we cast the generalized Bregman PCA problem as approximating $\x_i \in \vert\Xc\vert$ as a linear combination of a set of orthonormal principal directions in the dual space. This formulation reduces to~\pref{eq:vanilla-pca-obj} for the choice of $f = \id_d$, which corresponds to the squared Euclidean divergence. However, before defining the objective function formally, we consider the problem of finding a generalized mean in the dual space as follows. Let $\Xc = \{\x_i \in \dom F^*\}$ be a set of given data points. We define the generalized mean vector $\m$ as the minimizer of the following objective,
\begin{equation}
    \label{eq:gen-mean-obj}
    \m = \argmin_{\mt\in\dom F}\sum_i D_{F^*\!}(\x_i, f(\mt))\, .
\end{equation}
Note that the above~\pref{eq:gen-mean-obj} is a direct generalization of~\pref{eq:mean-obj} in terms of finding a shared constant representation for all points in $\Xc$ that minimizes a notion of \emph{Bregman compression loss}. The following proposition states the solution of the generalized mean in a closed form.
\begin{restatable}{proposition}{genmean}
The generalized dual mean in~\pref{eq:gen-mean-obj} can be written as
\begin{equation}
    \label{eq:gen-mean}
    \m = f^*\big(\frac{1}{\vert\Xc\vert}\sum_i \x_i\big)\, .
\end{equation}
\end{restatable}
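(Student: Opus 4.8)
\, The plan is to reduce \pref{eq:gen-mean-obj} to the classical fact that the arithmetic mean minimizes the average Bregman divergence taken over its \emph{first} argument, and then to invert the link function. Since $F$ is strictly convex and differentiable, $f = \nabla F$ is a bijection from $\dom F$ onto $\dom F^*$ with inverse $f^* = \nabla F^* = f^{-1}$ (as recalled after \pref{eq:bregman}). So I would first substitute $\mu^* = f(\mt)$, which is a one-to-one reparametrization, turning \pref{eq:gen-mean-obj} into the equivalent problem of minimizing $\Phi(\mu^*) := \sum_i D_{F^*}(\x_i, \mu^*)$ over $\mu^* \in \dom F^*$; it then remains to show that $\Phi$ is minimized uniquely at $\bar{\x} := \frac{1}{\vert\Xc\vert}\sum_i \x_i$, since $\m = f^*(\bar{\x})$ is exactly \pref{eq:gen-mean}. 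Note that $\bar{\x} \in \dom F^*$ because $\dom F^*$ is convex and each $\x_i$ lies in it, so $f^*(\bar{\x})$ is well defined.

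The key step is a generalized Pythagorean (bias--variance) identity: for every $\mu^* \in \dom F^*$,
\[
  \sum_i D_{F^*}(\x_i, \mu^*) \;=\; \sum_i D_{F^*}(\x_i, \bar{\x}) \;+\; \vert\Xc\vert \, D_{F^*}(\bar{\x}, \mu^*)\, .
\]
I would prove this by expanding both sides with the definition \pref{eq:bregman} applied to $G = F^*$, $g = f^*$: all the $F^*(\x_i)$ terms match, and the linear terms involving $f^*(\bar{\x})$ and $f^*(\mu^*)$ collapse because $\sum_i (\x_i - \bar{\x}) = \zero$. Given the identity, the first sum on the right is independent of $\mu^*$, and by non-negativity of Bregman divergences (zero iff the arguments coincide, as stated after \pref{eq:bregman}) the remaining term $\vert\Xc\vert \, D_{F^*}(\bar{\x}, \mu^*)$ is minimized, and equal to zero, exactly when $\mu^* = \bar{\x}$. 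Undoing the substitution then yields $\m = f^*(\bar{\x}) = f^*\!\big(\frac{1}{\vert\Xc\vert}\sum_i \x_i\big)$. As a consistency check, taking $F = \tfrac12\Vert\cdot\Vert^2$ gives $f = f^* = \id_d$ and $D_{F^*}$ the squared Euclidean distance, so the statement recovers the Euclidean mean of \pref{eq:mean-obj}.

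I expect no serious obstacle here; the only points requiring care are the domain bookkeeping — that $f$ is a genuine bijection $\dom F \to \dom F^*$ and that $\bar{\x} \in \dom F^*$, so that $f^*(\bar{\x})$ makes sense — and the uniqueness assertion, both of which rest on strict convexity of $F$ (equivalently, of $F^*$). An alternative route is to work directly with $\Phi$, computing $\nabla \Phi(\mu^*) = -\nabla^2 F^*(\mu^*)\sum_i(\x_i - \mu^*)$ and concluding $\mu^* = \bar{\x}$ from invertibility of the Hessian; but that needs $F^*$ twice differentiable and a separate argument that the stationary point is a global (not merely critical) minimizer, which the Pythagorean identity supplies for free. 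Hence I would present the identity-based argument.
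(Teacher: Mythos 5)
Your proposal is correct, but it takes a genuinely different route from the paper. The paper first invokes the duality identity $D_{F^*}(\x_i, f(\mt)) = D_{F}(\mt, f^*(\x_i))$ to move the optimization variable into the \emph{first} argument, then applies the derivative formula \pref{eq:d-bregman} and sets $\sum_i \big(f(\mt) - \x_i\big) = \zero$, giving $f(\m) = \frac{1}{\vert\Xc\vert}\sum_i \x_i$ directly. You instead stay in the dual space, reparametrize by $\mu^* = f(\mt)$, and prove the generalized Pythagorean (bias--variance) identity $\sum_i D_{F^*}(\x_i, \mu^*) = \sum_i D_{F^*}(\x_i, \bar{\x}) + \vert\Xc\vert\, D_{F^*}(\bar{\x}, \mu^*)$, from which optimality of $\mu^* = \bar{\x}$ is immediate. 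Both arguments are sound, and your identity is correct (the cross terms do cancel because $\sum_i(\x_i - \bar{\x}) = \zero$). What your route buys is global optimality and uniqueness without any differentiation: the paper's proof only exhibits a stationary point and tacitly relies on the fact that $D_F(\cdot, \v)$ is convex in its first argument (it equals $F$ plus an affine function) to conclude that the stationary point is the global minimizer. What the paper's route buys is brevity — it is two lines given \pref{eq:d-bregman} — and it avoids your domain bookkeeping about the surjectivity of $f$ onto $\dom F^*$ and membership of $\bar{\x}$ in $\dom F^*$ (points you handle correctly, and which the paper glosses over). Your closing sanity check against \pref{eq:mean-obj} in the Euclidean case matches the remark the paper makes immediately after the proposition.
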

Thus, the dual mean simply corresponds to the dual of the arithmetic mean of the data points. When $f = \id_d$, the dual mean reduces to the arithmetic mean.

Given the definition of the dual mean in~\pref{eq:gen-mean}, we now extend the vanilla PCA formulation in~\pref{eq:vanilla-pca-obj} to the class of Bregman divergences. First, we note that the geometry of the space of principal directions is altered when switching from a squared loss to a more general Bregman divergence. Specifically, given the convex function $G$ of a Bregman divergence, the inner-product is locally governed by the Riemannian metric~\cite{lee2006riemannian},
\[
D_G(\u + \delta\u, \u) = D_G(\u, \u + \delta\u) \approx \sfrac{1}{2}\, \delta\u^\top\,\H_G(\u)\delta\u\, ,
\]
where $\delta\u$ is a small perturbation and $\H_G = \nabla^2 G$ is the \emph{Hessian} of $G$. Thus, the definition of orthonormality needs to conform with the new geometry imposed by the Bregman divergence. In the following, we extend the definition of a Stiefel manifold to include a Riemannian metric. Recall that for a strictly convex function $G$, we have $\H_G \in \mathbb{S}_+^n$ where $\mathbb{S}_+^n$ denotes the set of $n\times n$ symmetric positive-definite matrices.
\begin{definition}
The generalized Stiefel manifold of $k$-frames in $\R^d$ with respect to the Riemannian metric $\M = \M(\v) \in \mathbb{S}^d_+, \v\in\R^d$ is defined as
\[
\text{\emph{St}}^{\text{\scalebox{0.8}{$(\M)$}}}_{d,k} = \{\U\in \R^{d\times k}:\, \U^\top\M\,\U = \I_k\}\, .
\]
\end{definition}
Note that for the Euclidean geometry, $\M(\v) = \I_d$ for all $\v \in \R^d$. Thus, the definition of $\St^{\text{\scalebox{0.8}{$(\M)$}}}_{d,k}$ recovers $\St^{\text{\scalebox{0.8}{$(\I_d)$}}}_{d,k}  = \St_{d,k}$ and we arrive at the orthonormality in the sense of Euclidean geometry.

We now formulate the generalized Bregman PCA problem following the local geometry of the strictly convex function at $\m$. Let $\Xc = \{\x_i \in \dom F^*\}$ be a set of given points. We define the generalized Bregman PCA as finding a linear combination of a set of generalized principal directions that minimizes the Bregman compression loss:
\begin{equation}
    \label{eq:gen-pca-obj}
    \V, \{\c_i\} =\!\!\!\! \argmin_{\Big\{\substack{\Vt\in\text{\scalebox{1.2}{$\St$}}^{\text{\scalebox{0.8}{$(\H_{\!F}(\m))$}}}_{d,k},\\ \{\ct_i \in \R^k\}}\Big\}}\!\!\sum_i D_{F^*\!}(\x_i, f(\m + \Vt\ct_i))\, ,
\end{equation}
where $\H_{F}(\m) = \nabla^2 F(\m)$. Note that the constraint $\V \in \St_{d,k}$ in~\pref{eq:vanilla-pca-obj} is now replaced with $\V \in \St^{\text{\scalebox{0.8}{$(\H_{\!F}(\m))$}}}_{d,k}$, i.e., using the Riemannian metric induced by the Hessian of the convex function $F$ evaluated at the dual mean $\m$.

\subsection{Optimization}
The generalized Bregman PCA objective in~\pref{eq:gen-pca-obj} does not yield a closed-form solution in terms of $\V$ and $\{\c_i\}$. However, the problem can be solved iteratively by applications of gradient descent steps on $\{\c_i\}$ and $\V$. Let $\xh_i \coloneqq f(\m + \V\c_i)$ denote the approximation of $\x_i$. For the compression coefficients $\{\c_i\}$, we apply
\begin{equation}
    \label{eq:a-update}
    \c_i^{\New} = \c_i - \eta_a\V^\top(\xh_i - \x_i)\, ,
\end{equation}
where $\eta_a > 0$ denotes the learning rate. Updating $\V$ involves two stages: gradient updates followed by a projection onto $\St^{\text{\scalebox{0.8}{$(\H_{\!F}(\m))$}}}_{d,k}$. We apply gradient decent updates,
\begin{equation}
    \label{eq:V-update}
    \V^{\New} = \V - \eta_{\scalebox{0.5}{$V$}}\sum_i (\xh_i - \x_i)\c_i^\top\, .
\end{equation}
where $\eta_{\scalebox{0.5}{$V$}} > 0$ denotes the learning rate. The final step involves projecting $\V$ onto $\St^{\text{\scalebox{0.8}{$(\H_{\!F}(\m))$}}}_{d,k}$. Notice that this projection needs to be applied only once at the end of optimization since both $\V$ and $\{\c_i\}$ are trained using gradient descent (any intermediate factor can be absorbed into the gradients). For the vanilla PCA where $\H_{F}(\m) = \I_d$, this can be applied easily by an application of QR decomposition. We provide a simple modification of the standard QR decomposition algorithm that achieves this for any $\H_{F}(\m) \in \mathbb{S}_+^n$, with almost no additional overhead in practice for our application.

\subsection{Generalized QR Decomposition}
A QR decomposition is a factorization of a matrix $\A \in \R^{m\times n}$ where $n \leq m$ into a product $\A = \Qb\Rb$ where $\Q\in \St_{\{m,n\}}$ and $\Rb \in \R^{n\times n}$ is an upper-triangular matrix. The first factor $\Q$ can be viewed as an orthonormalization of columns of $\A$, similar to the result of a Gram-Schmidt procedure. However, QR decomposition provides a more numerically stable procedure in general. The method of Householder reflections~\cite{GoluVanl96} is the most common algorithm for QR decomposition.

The following theorem provides a procedure that extends the standard QR decomposition to produce conjugate factors $\Q^\top\M\Q = \I_n$ for a given $\M \in \mathbb{S}_+^n$.
\begin{restatable}{theorem}{genqr}
\label{thm:qr}
Let $\emph{\QR}$ denote the procedure that returns the QR factors. Given $\M \in \mathbb{S}_+^n$ and $\A \in \R^{m \times n}$, let $\widetilde{\Qb}, \Rb = \emph{\QR}(\sqrt{\M}\A)$. Then, the matrix $\Qb = \sqrt{\M^{-1}}\widetilde{\Qb}$ corresponds to the generalized QR decomposition of $\A$ such that $\A = \Qb\Rb$ and $\Qb^\top\M\Qb = \I_m$.
\end{restatable}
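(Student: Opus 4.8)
The plan is to reduce the claim to the defining properties of the ordinary QR factorization by a change of coordinates through the positive-definite square root of $\M$. First I would recall the linear-algebra facts that make the construction legitimate: since $\M \in \mathbb{S}_+^n$, it has a unique symmetric positive-definite square root $\sqrt{\M}$ (via the spectral decomposition $\M = \U\,\mathrm{diag}(\lambda_i)\,\U^\top$ with all $\lambda_i > 0$, take $\sqrt{\M} = \U\,\mathrm{diag}(\sqrt{\lambda_i})\,\U^\top$); moreover $\sqrt{\M}$ is invertible, its inverse $(\sqrt{\M})^{-1} = \sqrt{\M^{-1}}$ is again symmetric positive-definite, and one has the two identities $\sqrt{\M}\,\sqrt{\M} = \M$ and $\sqrt{\M^{-1}}\,\M\,\sqrt{\M^{-1}} = \I$.

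Next I would apply the correctness of the $\QR$ subroutine to the input $\sqrt{\M}\A$: it returns $\widetilde{\Qb}$ with $\widetilde{\Qb}^\top\widetilde{\Qb} = \I$ and an upper-triangular $\Rb$ with $\sqrt{\M}\A = \widetilde{\Qb}\Rb$. Multiplying this identity on the left by $\sqrt{\M^{-1}}$ yields $\A = \sqrt{\M^{-1}}\widetilde{\Qb}\Rb = \Qb\Rb$ with $\Qb \coloneqq \sqrt{\M^{-1}}\widetilde{\Qb}$; since the triangular factor $\Rb$ is untouched, this is a bona fide factorization of $\A$ of the stated shape. Finally I would check the conjugacy constraint by a one-line computation: $\Qb^\top\M\Qb = \widetilde{\Qb}^\top\,\sqrt{\M^{-1}}\,\M\,\sqrt{\M^{-1}}\,\widetilde{\Qb} = \widetilde{\Qb}^\top\I\,\widetilde{\Qb} = \widetilde{\Qb}^\top\widetilde{\Qb} = \I$, using the symmetry of $\sqrt{\M^{-1}}$ to transpose it out of the first factor and then the identity $\sqrt{\M^{-1}}\,\M\,\sqrt{\M^{-1}} = \I$.

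I do not expect a substantive obstacle here: the whole proof is essentially the observation that conjugacy with respect to $\M$ becomes ordinary orthonormality after the isometry $\sqrt{\M}$. The only points that deserve a line of care are (i) dimensional bookkeeping — the metric matrix acting on the left of $\A \in \R^{m\times n}$ must be taken of size $m\times m$ and the conjugacy identity is then $n\times n$, so the index labels in the statement should be read accordingly; and (ii) noting that no full-column-rank hypothesis on $\A$ is needed for the \emph{existence} of the factorization produced by this procedure, since we never invoke uniqueness of $\QR$.
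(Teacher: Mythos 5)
Your proof is correct and follows essentially the same route as the paper's: apply the ordinary QR factorization to $\sqrt{\M}\A$, multiply on the left by $\sqrt{\M^{-1}}$ to recover $\A = \Qb\Rb$, and verify $\Qb^\top\M\Qb = \widetilde{\Qb}^\top\widetilde{\Qb} = \I$. Your added remarks on the existence of the symmetric positive-definite square root and on the dimensional bookkeeping (the metric must be $m\times m$ and the conjugacy identity is $n\times n$, not $\I_m$ as written in the statement) are accurate and, if anything, tidy up details the paper leaves implicit.
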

The generalized QR decomposition imposes almost no extra overhead compared to standard QR when the matrix $\M$ is diagonal. As we will see, this is in fact the case for the local metric induced by the majority of the commonly used transfer functions such as leaky ReLU, sigmoid, and tanh.

\begin{wrapfigure}{r}{0.47\textwidth}
\vspace{-1.1cm}
\begin{center}
\scalebox{1.}{
\begin{minipage}{0.47\textwidth}
\begin{algorithm}[H]
\caption{$\GQR(\A, \M)$: Generalized QR \mbox{Decomposition}}\label{alg:qr}
\begin{algorithmic}
    \State \textbf{Input:} matrix $\A \in \R^{m\times n}$ s.t. $n \leq m$, Riemannian metric $\M \in \mathbb{S}^m_+$
    \State \textbf{Output:} $\Qb \in \R^{m\times n}$ and $\Rb \in \R^{n \times n}$ factors such that $\A = \Qb\Rb$ and $\Qb^\top\M\Qb = \I_n$
    \State $\widetilde{\Qb}, \Rb \gets \QR(\sqrt{\M}\A)$\vspace{0.05cm}
    \State $\Qb \gets \sqrt{\M^{-1}}\widetilde{\Qb}$
    \State \textbf{Return:} $\Qb$, $\Rb$
\end{algorithmic}
\end{algorithm}
\end{minipage}
}
\end{center}
\vspace{-0.7cm}
\end{wrapfigure}
\subsection{The Case of Softmax Transfer Function}
Consider the case where the input examples are probability distributions $\{\x_i \in \Delta^{\!d-1}\}$ belonging to the $(d-1)$-simplex $\Delta^{\!d-1} = \{\u \in \R^{d}_+| \u^\top \one_d = 1\}$. The transfer function in this case corresponds to $f_{\text{\tiny SM}} = \softmax$ which induces the KL Bregman divergence $D_{F^*_{\text \tiny SM}}(\u, \v) = \sum_j (u_j \log \frac{u_j}{v_j}) - u_j + v_j$. Requiring $f_{\text{\tiny SM}}$ to be inevitable imposes the constraint $\dom f_{\text{\tiny SM}} = \R^d - \{\pm c\one_d,\, c\in\R_+\}$ as $f_{\text{\tiny SM}}(\u + c\,\one_d) = f_{\text{\tiny SM}}(\u)$ for $c\in \R$. Thus, for the principal directions, we have $\one_d \notin \cs(\V)$ where $\cs$ denotes column span. The above constraint can be easily incorporated into the generalized QR decomposition in Algorithm~\ref{alg:qr} when using the Householder method for the internal QR step.

 The Householder method applies a series of \emph{reflections} $\Pb_i = \I_m - 2\frac{\bb_i\bb_i^\top}{\bb_i^\top\bb_i}$ s.t. $\bb \in \R^m$ and $i\in [n-1]$, on matrix $\A$ such that 
\[
\Rb = \Pb_{n-1}\Pb_{n-2}\ldots\Pb_2\Pb_1\A
\]
is upper-triangular. The orthonormal matrix $\Q$ then can be written as 
\[
\Q = \Pb_1\Pb_2\ldots\Pb_{n-2}\Pb_{n-1}\, .
\]
The following proposition shows that, in order to obtain $\Q \in \St^{\text{\scalebox{0.8}{$(\M)$}}}_{d,k}$ from $\A$ s.t. $\Q^\top\M \one = 0$ using Householder reflections, it suffices to augment $\A$ from left by a column of all ones and apply Algorithm~\ref{alg:qr}. The resulting matrix $\Q$ corresponds to the first factor when the first column is dropped. All proofs are relegated to the appendix.
\begin{restatable}{proposition}{qrsoftmax}
\label{prop:qr-softmax}
Given $\A \in \R^{m\times n}$ s.t. $n < m$, let $\widetilde{\Q}, \widetilde{\Rb} = \emph{\GQR}([\one_m, \A], \M)$ by applying Algorithm~\ref{alg:qr} using Householder reflections. The $\Q$ and $\Rb$ factors, $\Q\Rb = \A$ s.t. $\Q \in \text{\emph{\St}}^{\text{\scalebox{0.8}{$(\M)$}}}_{d,k}$ and $\Q^\top\M \one_m = 0$, can be obtained from $\widetilde{\Q}$ and $\widetilde{\Rb}$, respectively, by dropping the first columns.
\end{restatable}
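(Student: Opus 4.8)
\quad The plan is to track the first column of $\widetilde{\Q}$ through the computation. Recall that $\GQR([\one_m,\A],\M)$ first forms the Householder factorization $\sqrt{\M}\,[\one_m,\A] = \Q_0\widetilde{\Rb}$ (so $\Q_0^\top\Q_0 = \I_{n+1}$ and $\widetilde{\Rb}$ is upper triangular) and then returns $\widetilde{\Q} = \sqrt{\M^{-1}}\Q_0$ together with $\widetilde{\Rb}$; by Theorem~\ref{thm:qr} this gives $[\one_m,\A] = \widetilde{\Q}\widetilde{\Rb}$ and $\widetilde{\Q}^\top\M\widetilde{\Q} = \I_{n+1}$. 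First I would invoke the standard structural property of Householder QR: if $\Q_0 = \Pb_1\Pb_2\cdots\Pb_p$ with $\Pb_i$ the $i$-th Householder reflection (whose reflection vector is supported on coordinates $i,\dots,m$), then $\Pb_2,\dots,\Pb_p$ all fix $\e_1$, so the first column of $\Q_0$ is $\Q_0\e_1 = \Pb_1\e_1$; since $\Pb_1$ is the reflection sending the first input column to $\alpha\e_1$ with $\alpha = \pm\lVert\sqrt{\M}\one_m\rVert_2 = \pm\sqrt{\one_m^\top\M\one_m}\neq 0$, and $\Pb_1^2 = \I$, we get $\Q_0\e_1 = \tfrac{1}{\alpha}\sqrt{\M}\one_m$. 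Consequently the first column of $\widetilde{\Q}$ is $\sqrt{\M^{-1}}\bigl(\tfrac1\alpha\sqrt{\M}\one_m\bigr) = \tfrac1\alpha\one_m$, i.e.\ a nonzero multiple of $\one_m$.

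Given this, the constraints on $\Q$ follow from the conjugacy relation. Partition $\widetilde{\Q} = [\,\widetilde{\Q}_{:,1}\mid\Q\,]$ with $\Q\in\R^{m\times n}$ the trailing $n$ columns. Reading off the blocks of $\widetilde{\Q}^\top\M\widetilde{\Q} = \I_{n+1}$ gives $\Q^\top\M\Q = \I_n$ (so $\Q$ lies on the generalized Stiefel manifold) and $\Q^\top\M\,\widetilde{\Q}_{:,1} = \zero$, which, since $\widetilde{\Q}_{:,1} = \tfrac1\alpha\one_m$, is precisely $\Q^\top\M\one_m = \zero$. In particular $\one_m\notin\cs(\Q)$, since $\one_m = \Q\c$ would force $\c = \Q^\top\M\one_m = \zero$ and hence $\one_m = \zero$.

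For the factor $\Rb$ and the reconstruction, note that $\M$-orthonormality of $\widetilde{\Q}$ gives $\widetilde{\Rb} = \widetilde{\Q}^\top\M[\one_m,\A]$; partition it as $\widetilde{\Rb} = \left(\begin{smallmatrix}\rho & \vs^\top\\ \zero & \Rb\end{smallmatrix}\right)$ with $\Rb\in\R^{n\times n}$ upper triangular (the block remaining after deleting the first row and column). Expanding $[\one_m,\A] = \widetilde{\Q}\widetilde{\Rb}$ column-wise yields $\one_m = \rho\,\widetilde{\Q}_{:,1}$ (consistent with the above) and $\A = \widetilde{\Q}_{:,1}\vs^\top + \Q\Rb = \tfrac1\alpha\one_m\vs^\top + \Q\Rb$. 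Thus $\Q\Rb = \bigl(\I - \tfrac{\one_m\one_m^\top\M}{\one_m^\top\M\one_m}\bigr)\A$ is the $\M$-orthogonal projection of the columns of $\A$ onto $\{\v : \one_m^\top\M\v = 0\}$, so it agrees with $\A$ up to a multiple of $\one_m$ in each column. Since $f_{\text{\tiny SM}}(\u + c\,\one_m) = f_{\text{\tiny SM}}(\u)$, this residual component does not affect the reconstruction $f_{\text{\tiny SM}}(\m + \V\c)$, and hence $\Q$, $\Rb$ (obtained from $\widetilde{\Q}$, $\widetilde{\Rb}$ by dropping the first column, resp.\ the first row and column) are the desired constrained factors.

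I expect the main work to be the first paragraph: making the bookkeeping of the Householder reflections precise enough that the first column genuinely passes through as a scalar multiple of $\sqrt{\M}\one_m$, which is then cleaned up by the $\sqrt{\M^{-1}}$ conjugation into a multiple of $\one_m$. Everything afterwards is a mechanical consequence of $\widetilde{\Q}^\top\M\widetilde{\Q} = \I_{n+1}$. The only delicate point of phrasing is that the claimed identity $\Q\Rb = \A$ must be read modulo the $\one_m$-direction, which is legitimate exactly because $\softmax$ is invariant to additive constant shifts of its argument.
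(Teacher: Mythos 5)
Your proof is correct and follows essentially the same route as the paper's: track the first column through the Householder reflections to see that it becomes a nonzero multiple of $\one_m$ after the $\sqrt{\M^{-1}}$ conjugation, then read the generalized Stiefel constraint and $\Q^\top\M\one_m = \zero$ off the block structure of $\widetilde{\Q}^\top\M\widetilde{\Q} = \I_{n+1}$. You are in fact slightly more careful than the paper on the last step: the paper likewise only establishes $\softmax(\Q\Rb) = \softmax(\A)$ rather than the literal identity $\Q\Rb = \A$ stated in the proposition, and your explicit formula $\Q\Rb = \bigl(\I - \one_m\one_m^\top\M/(\one_m^\top\M\one_m)\bigr)\A$ makes that caveat precise.
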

Our generalized Bregman PCA algorithm with a mean is given in Algorithm~\ref{alg:pca}. We omit the case of the softmax function in the main algorithm for simplicity (see Appendix~\ref{alg:full-pca} for the full algorithm).

\begin{figure*}[t!]
\vspace{-0.3cm}
\begin{center}
    \subfigure[]{\includegraphics[width=0.24\linewidth]{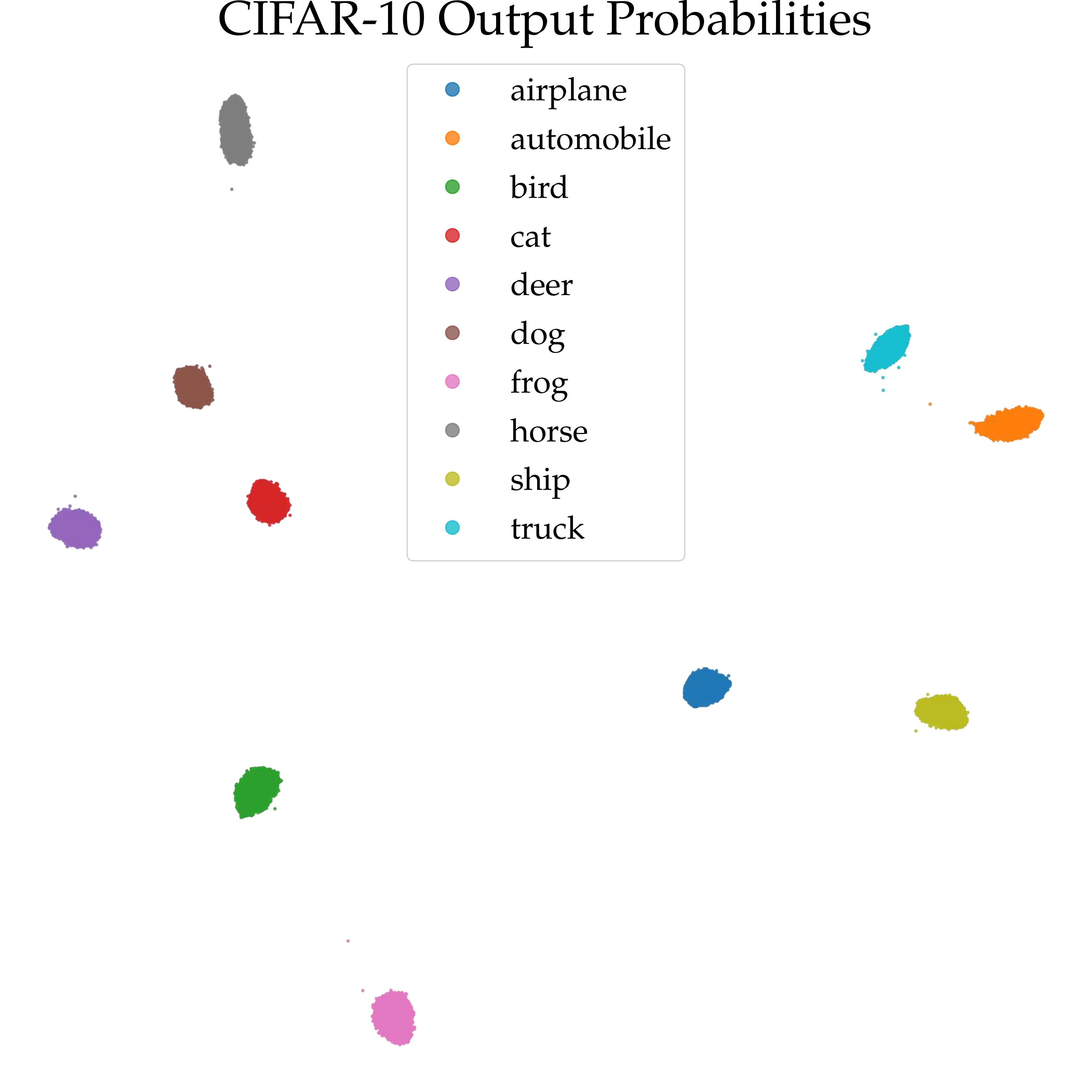}} 
    \subfigure[]{\includegraphics[width=0.24\linewidth]{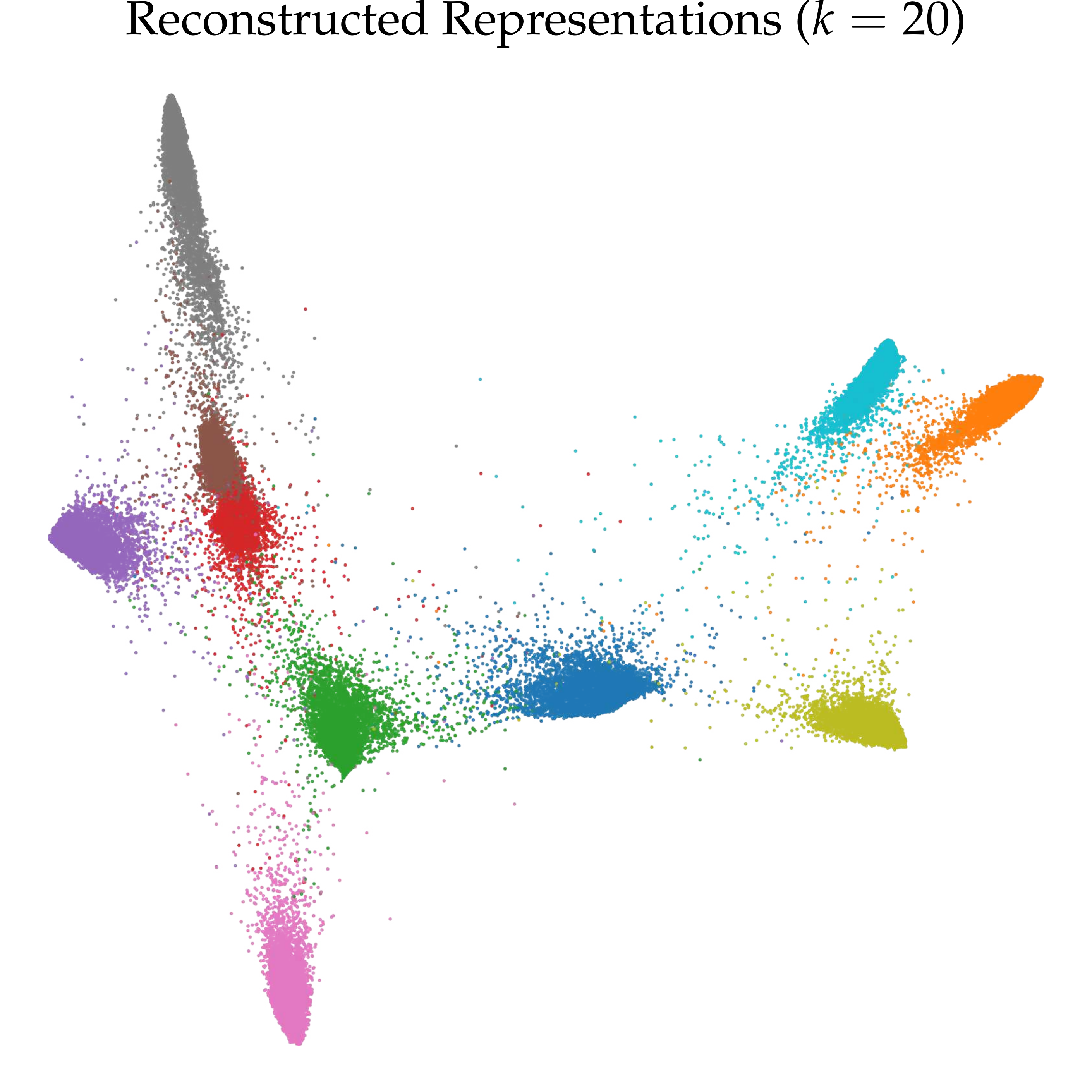}}  
    \subfigure[]{\includegraphics[width=0.24\linewidth]{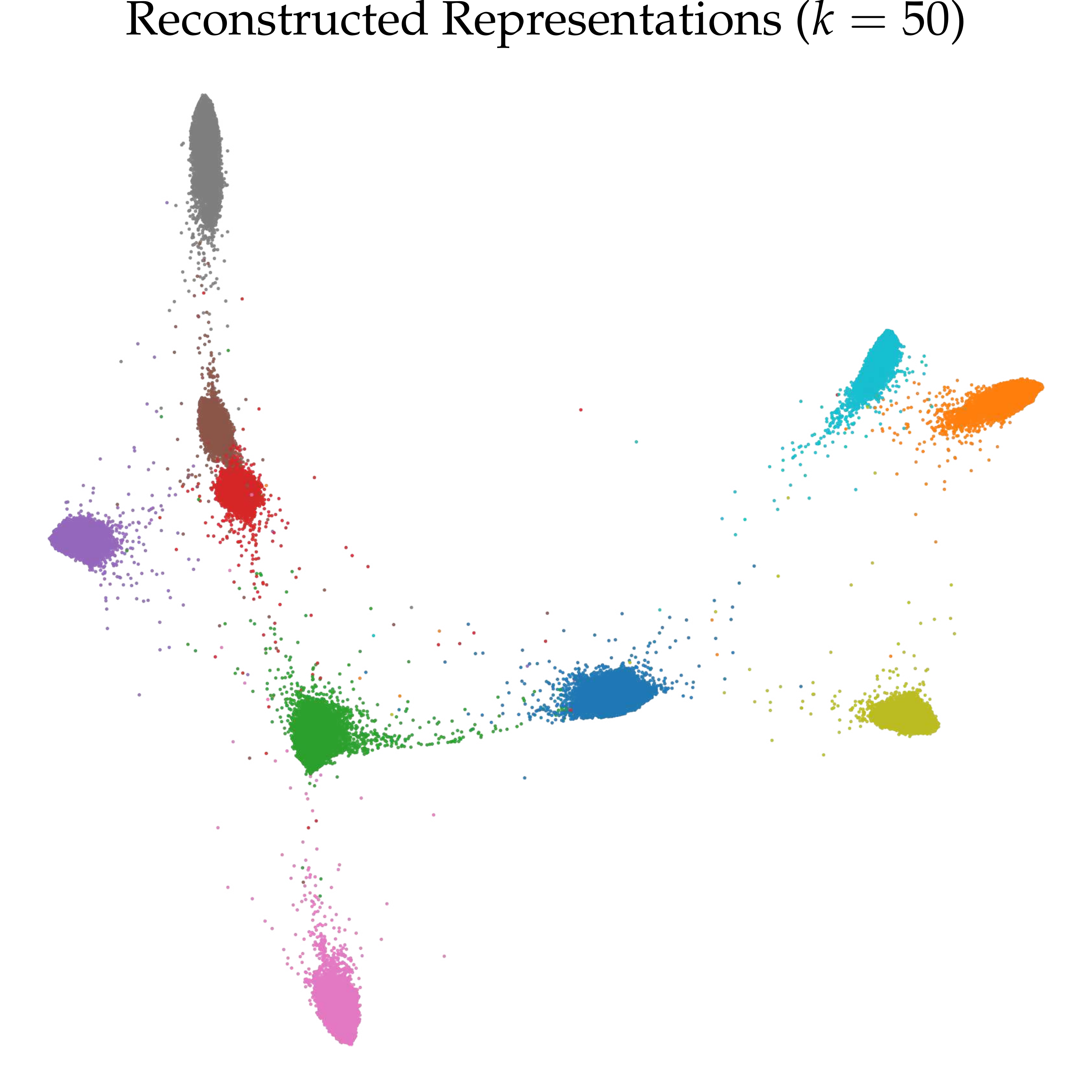}}
    \subfigure[]{\includegraphics[width=0.24\linewidth]{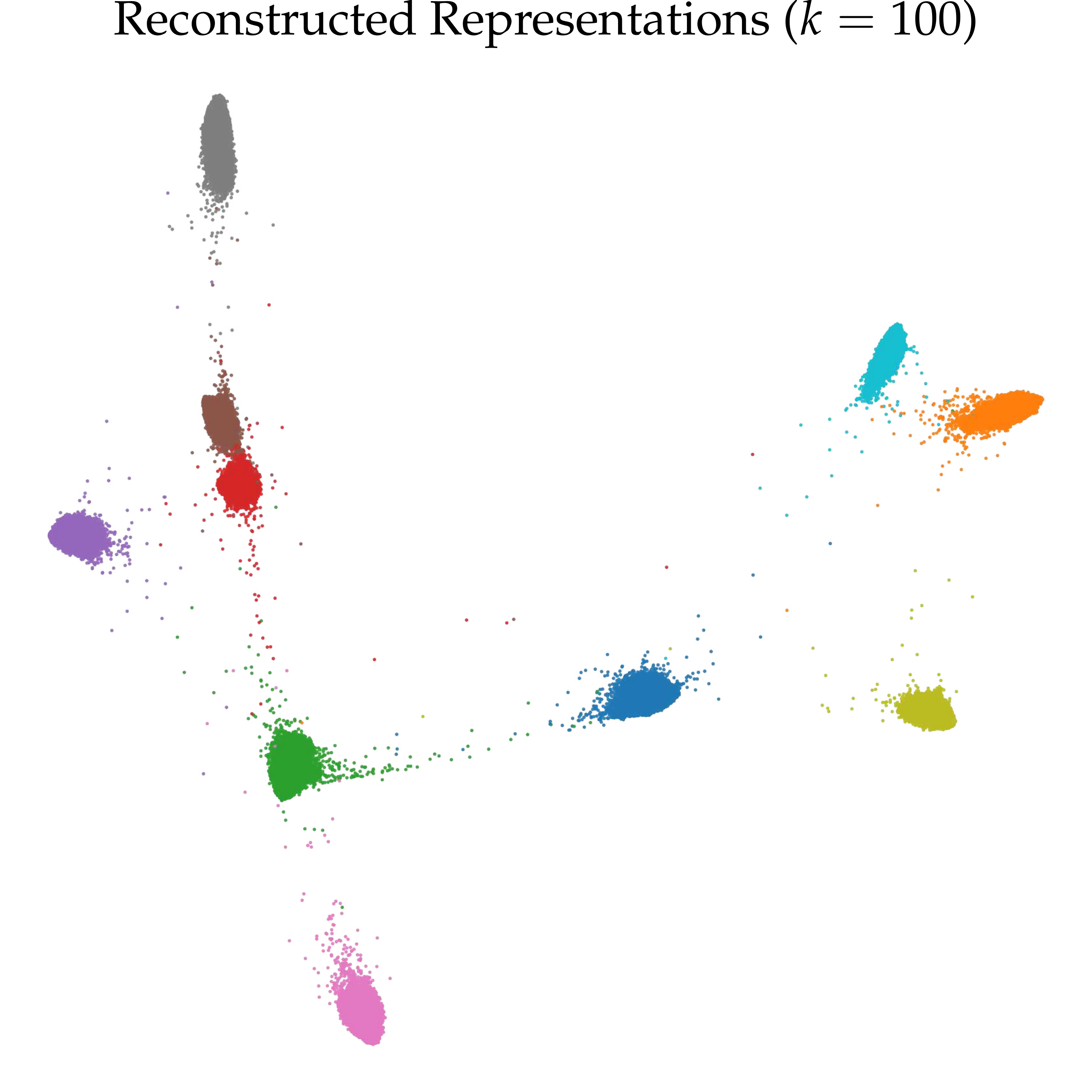}}
    \vspace{-0.25cm}
    \caption{Visualization of (a) the output probabilities of the teacher model $\{\y_i^{L}\}$ and (b)-(d) the reconstructed representation of the training examples at the leaky ReLU layer $\{\hat{\y}_i^{L-1}\}$ using TriMap~\cite{2019TRIMAP}. As the number of components $k$ increases, the approximated representations become better separable. Interestingly, TriMap reveals that the representations at layer $L-1$ form similar clusters as the ones in layer $L$.}
    \label{fig:cifar10_coeff}
    \vspace{-0.6cm}
    \end{center}
\end{figure*}

\begin{center}
\begin{algorithm}[t!]
\caption{Bregman PCA with a Mean}\label{alg:pca}
\begin{algorithmic}
    \State \textbf{Input:} $\Xc = \{\x_i \in \dom F^*\}$, number of components $k$
    \State \textbf{Output:} dual mean $\m \in \dom F$, $\V \in \St^{\text{\scalebox{0.8}{$(\H_{\!F}(\m))$}}}_{d,k}$, $\{\c_i \in \R^k\}$
    \State $\m \gets f^*\big(\frac{1}{\vert\Xc\vert}\sum_i \x_i\big)$
    \State initialize $\V$ and $\{\c_i\}$
\Repeat
    \For{\,$i \in [\vert\Xc\vert]$\,}
        \State $\xh_i \gets f(\m + \V\c_i)$\vspace{0.05cm}
        \State $\c_i \gets \c_i - \eta_a\V^\top(\xh_i - \x_i)$
    \EndFor
    \State $\V \gets \V - \eta_{\scalebox{0.5}{$V$}}\sum_i (\xh_i - \x_i)\c_i^\top$\vspace{0.05cm}
    \vspace{0.1cm}
\Until{\,\,$\V, \{\c_i\}$ not converged}\,\,
\State $\V, \T \gets \GQR(\V, \H_{F}(\m))$
\State \textbf{return} $\m$, $\V$, $\{\T\c_i\}$
\end{algorithmic}
\end{algorithm}
\vspace{-0.2cm}
\end{center}

\section{Representation Learning of Deep Neural Networks}
One important application of our Bregman PCA is learning the representations of a deep neural network in each layer. Specifically, in a deep neural network, each layer transforms the representation that receives from the previous layer and passes it to the next layer. In a given layer, we are interested in learning the mean and principal directions that can encapsulate the representations of all training examples in that layer. Although vanilla PCA might be the na\"ive choice for this purpose, we will consider learning better representations using our extended Bregman PCA approach.

A natural choice of a Bregman divergence for a layer having a strictly increasing transfer function is the one induced by the convex integral function of the transfer function~\cite{loco}. In Bregman PCA, we essentially minimize the matching loss of the transfer function $f$ instead of the quadratic compression loss used for vanilla PCA. The matching loss was introduced in~\cite{match,multidim} and is also the main tool in the recent work on training deep neural networks~\cite{bitemp,loco}. Specifically, let $\a^{(\ell)}_i \in \R^d$ and $\y^{(\ell)}_i\in \R^d$ respectively be the pre and post (transfer function) activations of a neural network for a given input example $\x_i$ at layer $\ell \in [L]$ having a (elementwise) strictly increasing transfer function $f^{(\ell)}$ s.t. $\y^{(\ell)}_i = f^{(\ell)}(\a^{(\ell)}_i)$. Let $F^{(\ell)}$ denote the convex integral function of $f^{(\ell)}$ where $f^{(\ell)} = \nabla F^{(\ell)}$. Then, for a given set of input examples $\Xc = \{\x_i\}$ having post-activation representations $\Yc^{(\ell)} = \{\y^{(\ell)}_i\}$, we can cast the Bregman PCA problem in layer $\ell \in [L]$ as learning $\V^{(\ell)} \in\St^{\text{\scalebox{0.8}{$(\H_{\!F^{(\ell)}}(\m^{(\ell)}))$}}}_{d,k}$ and  $\{\c_i^{(\ell)}\}$ with $\m^{(\ell)} = f^{*(\ell)}\big(\frac{1}{\vert\Yc^{(\ell)}\vert}\sum_i \y^{(\ell)}_i\big)$ that minimize the objective
\begin{equation}
    \label{eq:nn-pca-obj}
    \sum_i D_{F^{*(\ell)}\!}(\y^{(\ell)}_i, f^{\ell}(\m^{(\ell)} + \V^{(\ell)}\c^{(\ell)}_i))\, .
\end{equation}
We explore several ideas based on this layerwise construction, including learning the principal directions in each layer in the next section. Specifically, we consider representation learning in the final softmax layer as well as the fully-connected leaky ReLU layer before the final softmax layer of a ResNet-18 model. However, our construction is applicable to other types of layers, such as convolutions. We then consider knowledge distillation using the representations learned from the ResNet-18 teacher model to train a smaller convolutional student model on the same dataset. Our knowledge distillation approach is illustrated in Figure~\ref{fig:network}. Our approach consists of learning the dual mean $\m$ and the principal $\V$ in the $\ell$-th layer of the student network. The representation $\y_i^\ell$ of a training example $\x_i$ is then approximated by $\yh_i^\ell = f^\ell(\V\c_i + \m)$ where the compression coefficients $\c_i \in \R^k$ are predicted by a smaller student network. The approximated representation $\yh_i^\ell$ can then be passed through the rest of the pre-trained teacher network or a smaller network that is trained from scratch to predict the output labels. This approach can easily be extended to distilling information from several different layers of the teacher model, where one can use a cascade of student networks in which the approximate output representation produced by the previous student network is passed as input to the next student network. However, we defer exploring such extensions to future work and focus on knowledge distillation using the representation of a single layer of the teacher model.

\section{Experiments}
We conduct the first set of experiments on the CIFAR-10 and CIFAR-100 datasets, each consisting of \num{50000} train and \num{10000} test images of size $32\times 32$ from, respectively, $10$ and $100$ classes.\footnote{Available at \url{https://www.cs.toronto.edu/~kriz/cifar.html}.} In order to extract representations, 
we first train a PreAct ResNet-18 model using SGD with a Nesterov momentum optimizer and a batch size of $128$. The only modification we apply to the network is replacing the global average pooling layer before the final dense layer with a flattening operator. This modification yields a representation of dimension $8192$ before the final layer. We also change the transfer function of that layer from a ReLU to a leaky ReLU with a small negative slope of $\beta=1\mathrm{e}{-4}$ to obtain a strictly monotonic transfer function. The network achieves $92.62\%$ and $70.44\%$ top-1 test accuracy on CIFAR-10 and CIFAR-100, respectively. We then pass the train and test examples of both datasets and store the leaky ReLU post-activations ($d=8192$) and the output softmax probabilities ($d=10$ and $d=100$ for CIFAR-10 and CIFAR-100, respectively) of each example. In the following, we consider several experiments for compressing each of these representations. We use SGD with a heavy-ball momentum for optimizing our Bregman PCA problem. At last, we also provide results on the ImageNet-1k datasets~\cite{imagenet}.

\subsection{PCA on Output Probabilities}
We first consider the problem of compressing the output probabilities. This problem corresponds to a Bregman PCA with a softmax link function and a KL divergence. For comparison, we consider vanilla PCA (\pref{eq:vanilla-pca-obj}) in the pre-activations (i.e. logits) domain. Note that vanilla PCA is not directly applicable to the post-activation probabilities, as the reconstructed examples may not correspond to valid probability distributions. We also consider CoDA PCA~\cite{coda} by augmenting a constant bias of $1$ to the coefficient to handle the non-centered case, as suggested in the paper. We also use SGD with a heavy-ball momentum for solving the CoDA PCA problem. We report the KL divergence between the model probabilities and the reconstructed probabilities for the train and test sets as the performance measure.

The average KL divergence values between the original output and the reconstructed probabilities are shown in Figure~\ref{fig:cifar10_coeff}(a)-(b). As can be seen from the figure, Bregman PCA yields a much lower loss compared to vanilla PCA and CoDA PCA. Also, notice that CoDA-PCA becomes inefficient as the dimension of the problem ($d$ or $k$) increases.

\subsection{PCA on Leaky ReLU Outputs}
We consider the problem of compressing the representation in the leaky ReLU layer before the final dense layer (a.k.a. the penultimate layer). The convex conjugate function $F_{\beta}^*$ corresponding to a leaky ReLU transfer function $f_{\beta}(\u) = \max(\u, \zero) - \beta\max(-\u, \zero)$ with a slope $\beta > 0$ amounts to $F^*_{\beta}(\u) = \sfrac{1}{2}\,\u \odot f_{\beta^{-1}}(\u)$ where $\odot$ denotes Hadamard product. For this problem, we consider vanilla PCA on the pre and post-leaky ReLU activations as well as our Bregman PCA. In order to compare the methods, we pass the reconstructed representation of each example through the final layer of the trained ResNet-18 model and measure the top-1 accuracy. 

The results are given in Figure~\ref{fig:cifar10_coeff}(c)-(d). Our Bregman PCA works significantly better than vanilla PCA for reconstructing the representation and yields a much higher classification accuracy at the output. In order to visualize these representations, we illustrate the 2-D TriMap~\cite{2019TRIMAP} projections of the CIFAR-10 output probabilities as well as the reconstructed representations of the examples at the leaky ReLU layer using our method. We can see from the figures that the two representations, although one layer apart, form very similar clusters. Also, as the number of components $k$ increases, the leaky ReLU representations become better separable.

\begin{figure}[t!]
\vspace{-0.3cm}
\begin{center}
    \subfigure[]{\includegraphics[width=0.24\linewidth]{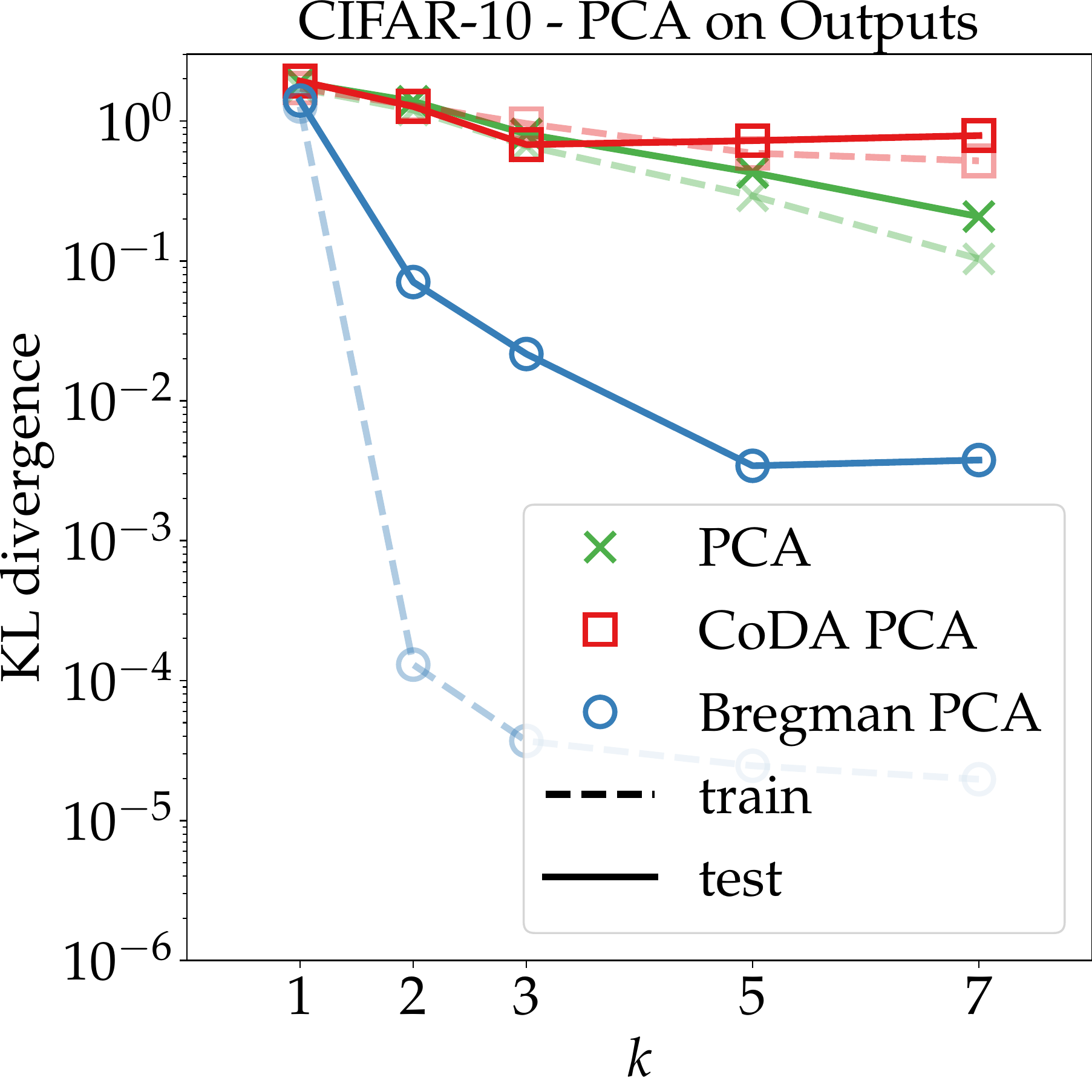}}
    \subfigure[]{\includegraphics[width=0.24\linewidth]{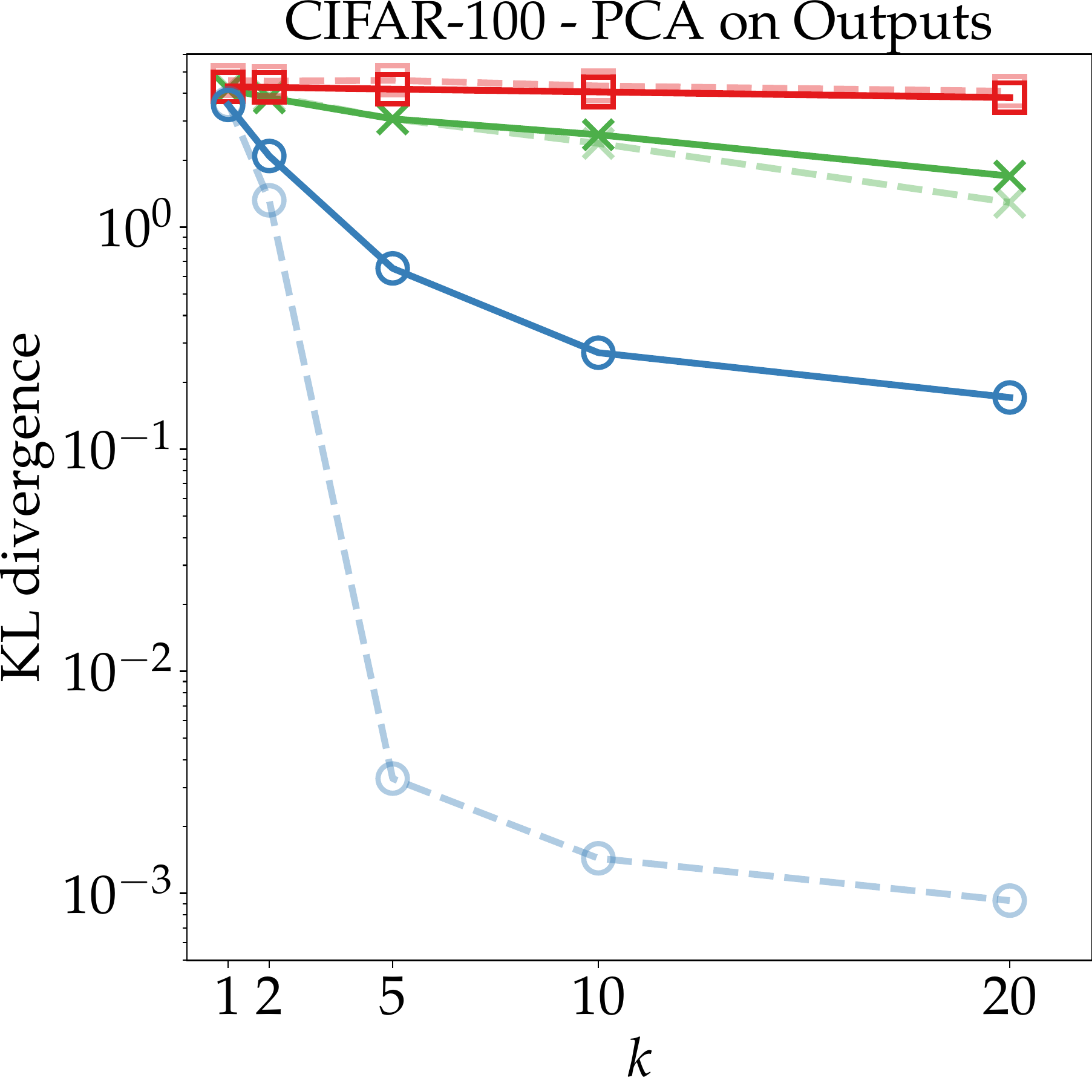}}
    \subfigure[]{\includegraphics[width=0.24\linewidth]{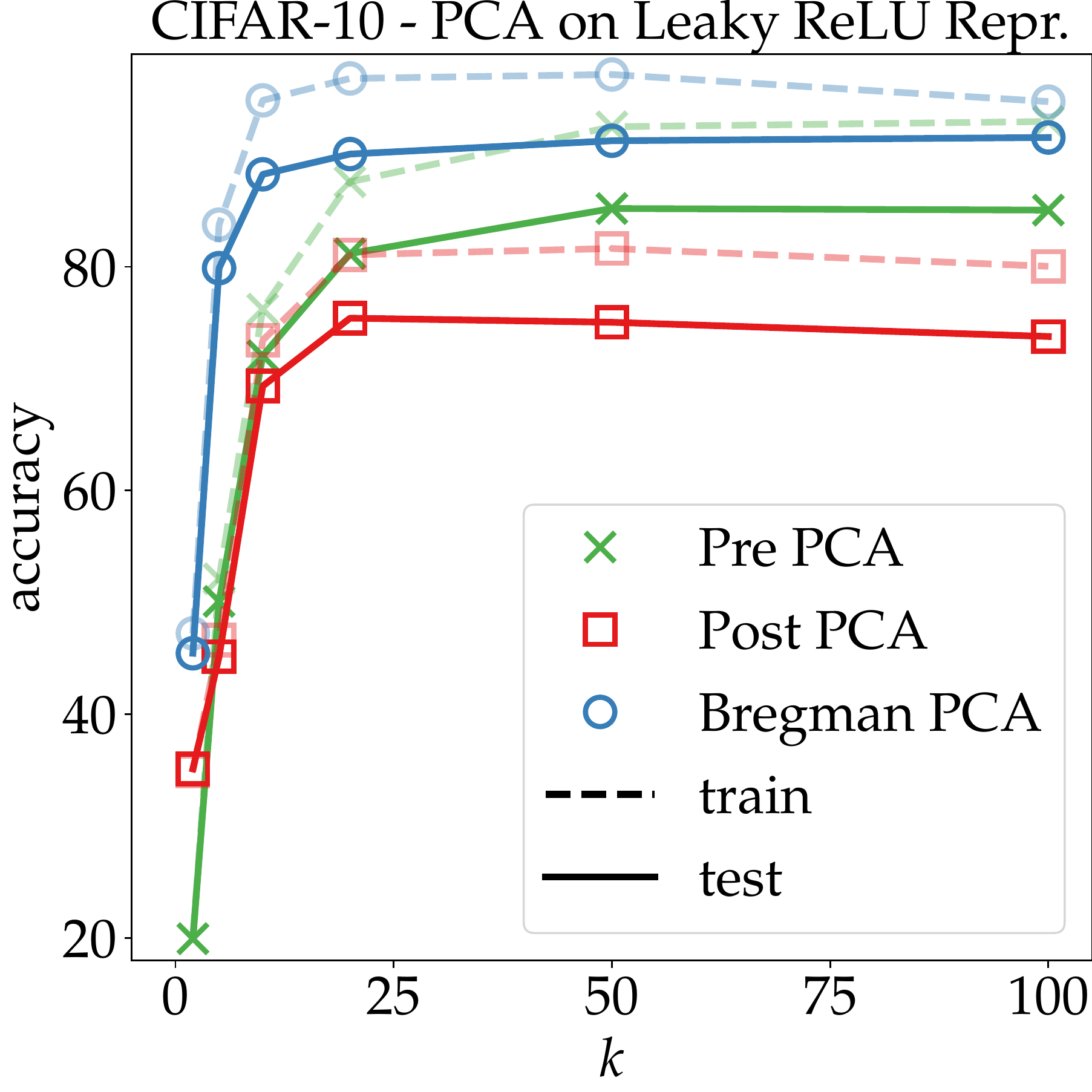}}
    \subfigure[]{\includegraphics[width=0.24\linewidth]{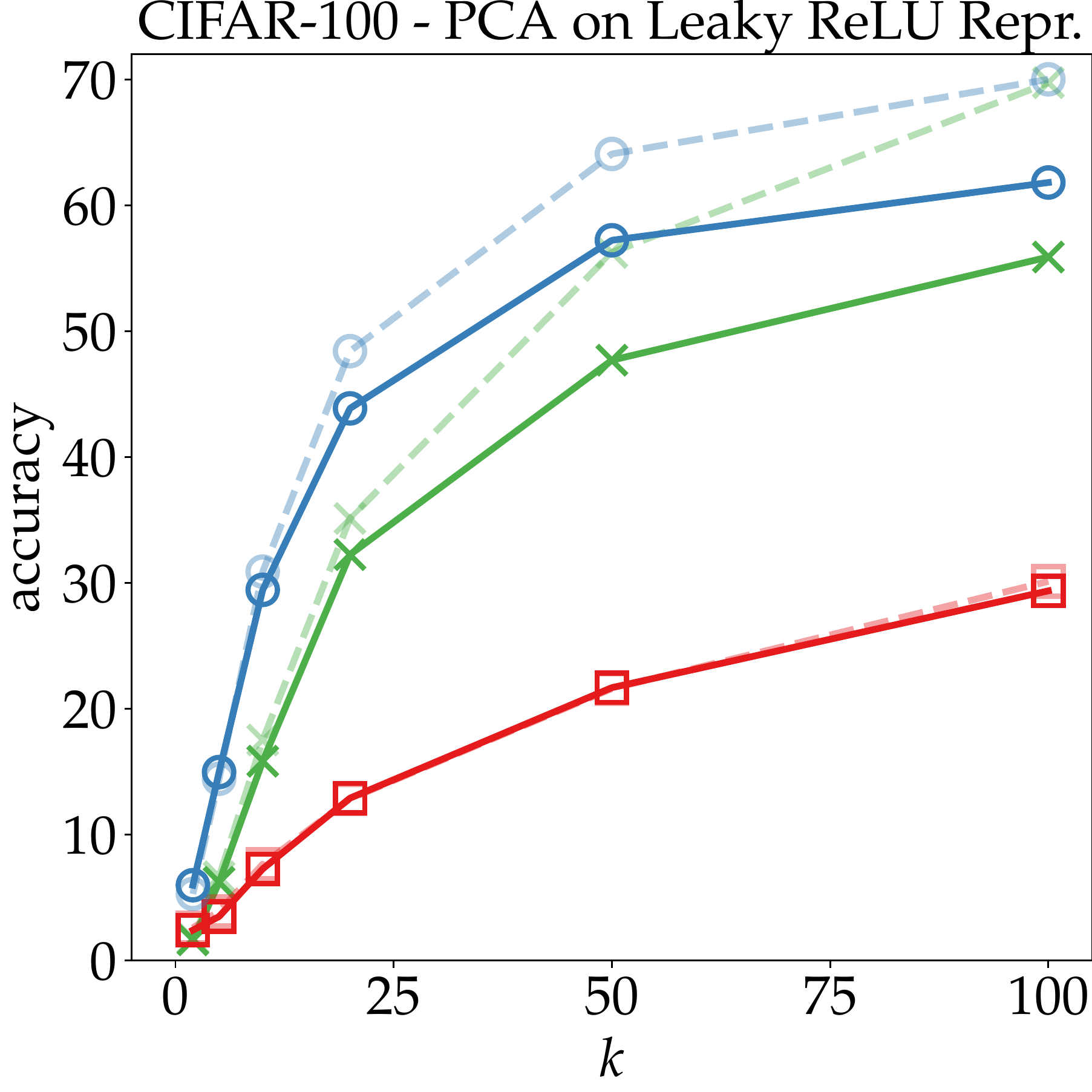}}
    \vspace{-0.25cm}
    \caption{(a) and (b): KL divergence between the original and reconstructed class probabilities of the train and test examples at the final layer. (Lower values are better.) Our Bregman PCA approach significantly outperforms both vanilla PCA on the logits as well as CoDA PCA. (c) and (d): Top-1 classification accuracy of the reconstructed leaky ReLU representations when passed through the rest of the network. (Higher values are better.) Our Bregman PCA works significantly better than vanilla PCA for reconstructing the representation and yields a much higher classification accuracy.}
    \label{fig:cifar10_coeff}
    \vspace{-0.6cm}
    \end{center}
\end{figure}

\subsection{Distillation Results on CIFAR-10/100}
We conduct knowledge distillation experiments using our Bregman Representation Learning ({\bf BRL}) method. The architecture we consider for the experiments is shown in Figure~\ref{fig:network}. The student network consists of a small convolutional neural network with $5$ convolutional layers of size $[128\, (\times 2),$ $ 256\, (\times 2),$ $512]$ followed by a dense linear layer which outputs $k$ coefficients. These coefficients are then passed through a dense layer (with weights $\V$ and bias $\m$), which outputs a representation of size $8192$. The final dense (readout) layer with softmax transfer function converts this representation into output class probabilities.

For comparison, we consider the following approaches: 1) Baseline model for which all the weights are randomly initialized and are trained using the same training examples as the teacher, 2) Baseline + Readout where all the weights are initialized randomly except the readout layer, which is set to the pre-trained weights of the teacher, 3) Baseline Distillation where we use a convex combination of the teacher's soft labels (for which we also tune a temperature for the logits) with the one-hot training labels. The ratio of the teacher's soft labels to train labels is tuned for each case. 4) BRL, where we randomly initialize all the weights except $\V$ and $\m$, which are set to the values obtained by applying our Bregman PCA on the leaky ReLU representations of the training examples, as in the previous section. 5) BRL + Readout, which is similar to BRL, but we set the readout layer weights to the teacher weights. Each model is trained for $51$ epochs using a batch size of $128$ using SGD with a Nesterov momentum optimizer with a linear decay schedule. In each case, we tune the learning rate and momentum hyper-parameters. 

To train BRL, we first train the student model by minimizing a squared loss between the predicted coefficients of the student and the compression coefficients obtained by directly applying our Bregman PCA to the teacher's representations (as in the previous section). This essentially reduces the training of the student network from classification to a regression problem. We find that this pre-training on the teacher coefficients significantly improves the convergence of the student model. After training on the compression coefficients for $25$ epochs, we switch to directly minimizing the cross-entropy loss at the output layer for the rest of the iterations.

To disentangle the improvements due to better information transfer for knowledge distillation from the gains obtained due to data augmentation, we conduct the experiments on the original examples without any additional augmentations (such as random horizontal flip, random crop, Mixup~\cite{zhang2017mixup}, etc.), which are standard for training these models. This resembles a limited sample regime where the number of training examples is relatively small. We conduct additional experiments by adding these augmentations and observe improvements for all baselines (see the appendix). In order to highlight the limited samples aspect even further, we also repeat each experiment using only half of the training dataset, which is uniformly sampled from the full dataset (and fixed for all methods).

\begin{figure*}[t!]
\vspace{-0.3cm}
\begin{center}
    \subfigure{\includegraphics[width=0.24\linewidth]{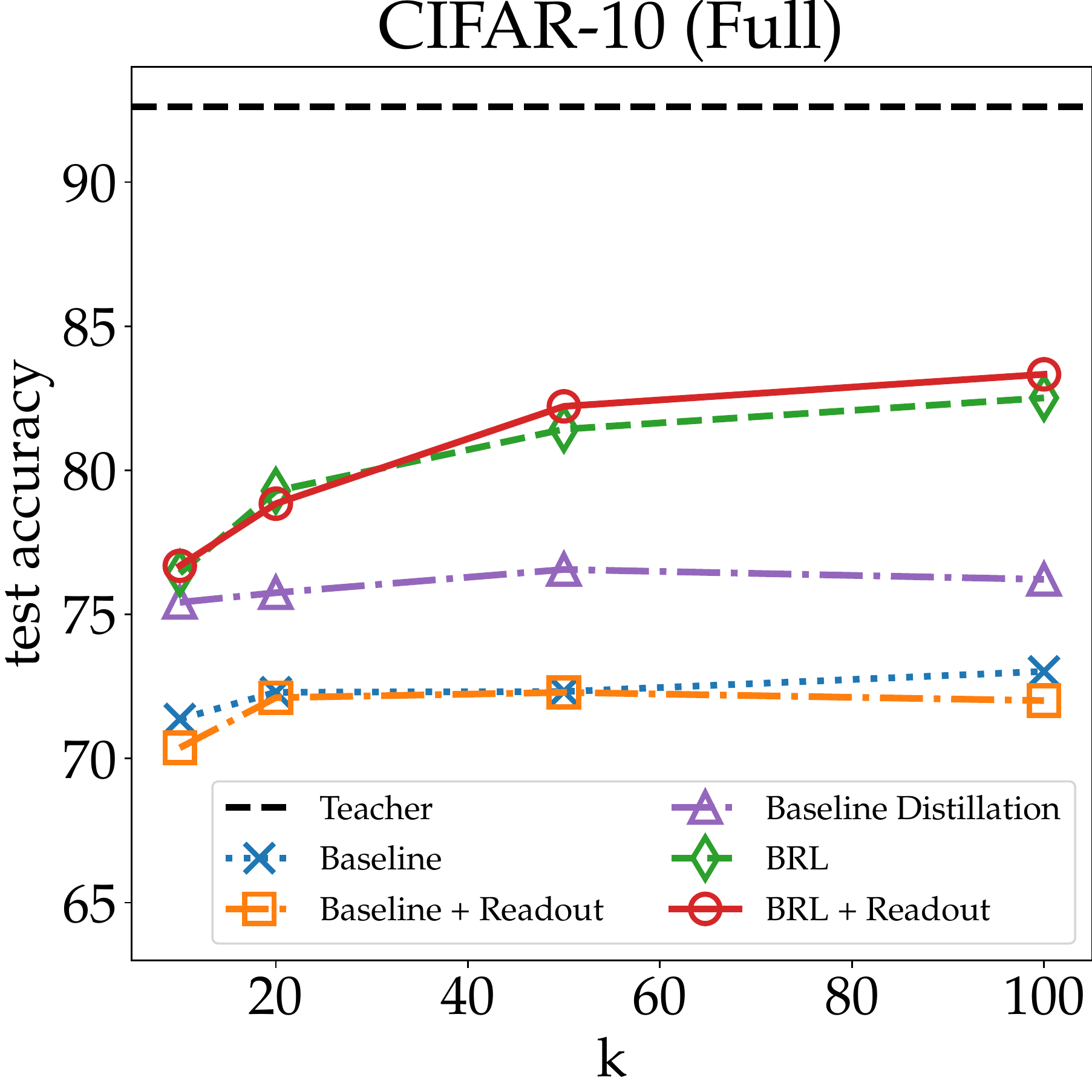}} 
    \subfigure{\includegraphics[width=0.24\linewidth]{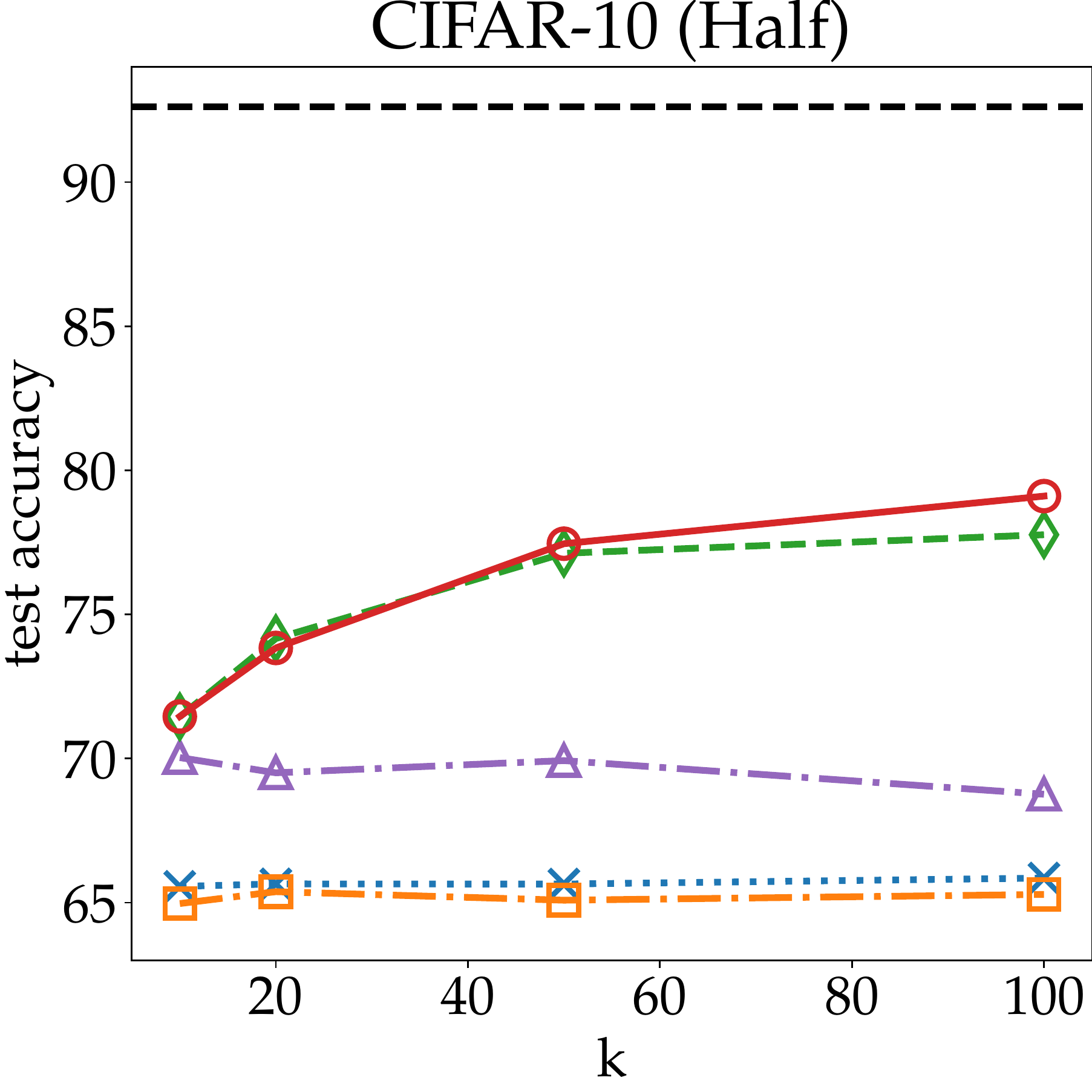}}  
    \subfigure{\includegraphics[width=0.24\linewidth]{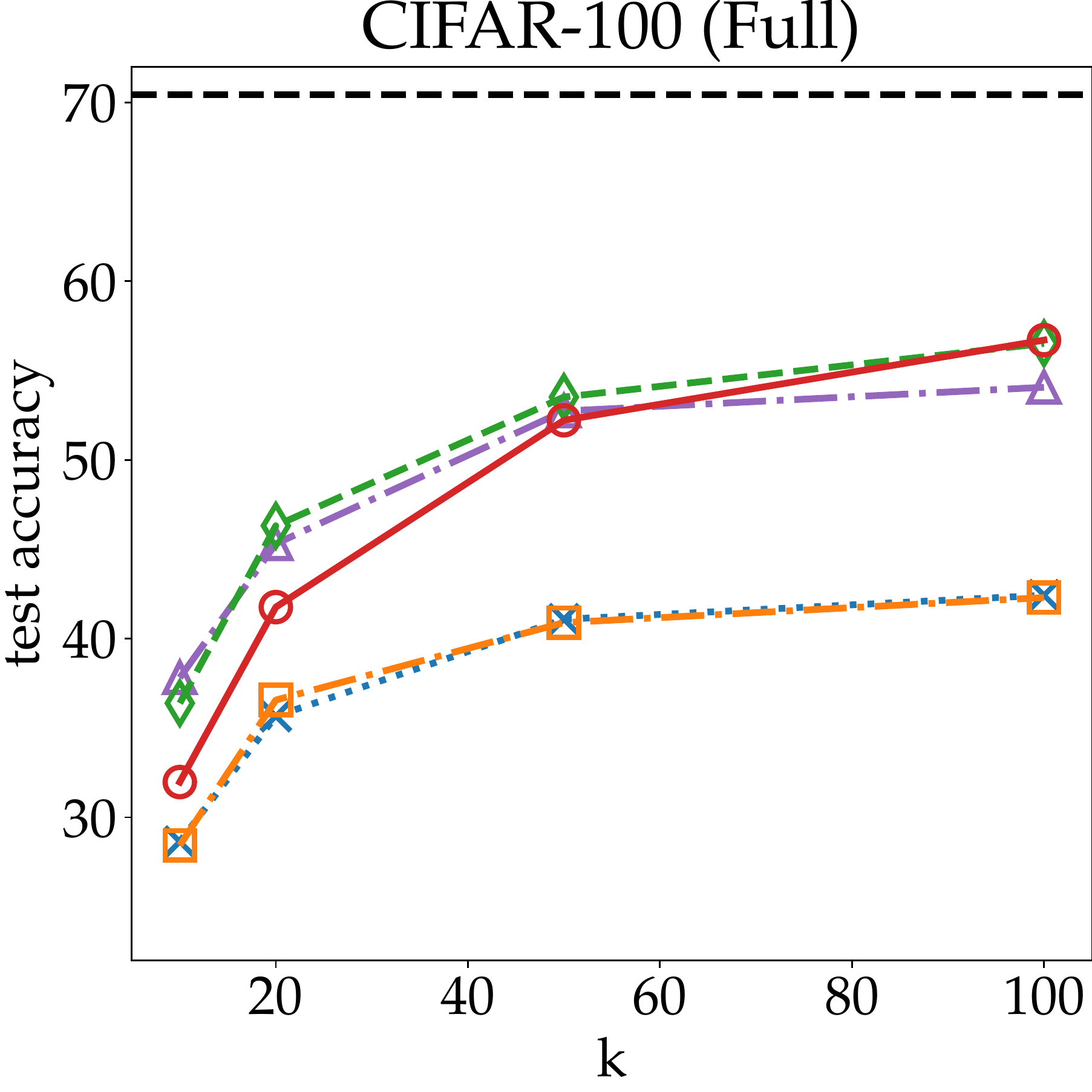}}
    \subfigure{\includegraphics[width=0.24\linewidth]{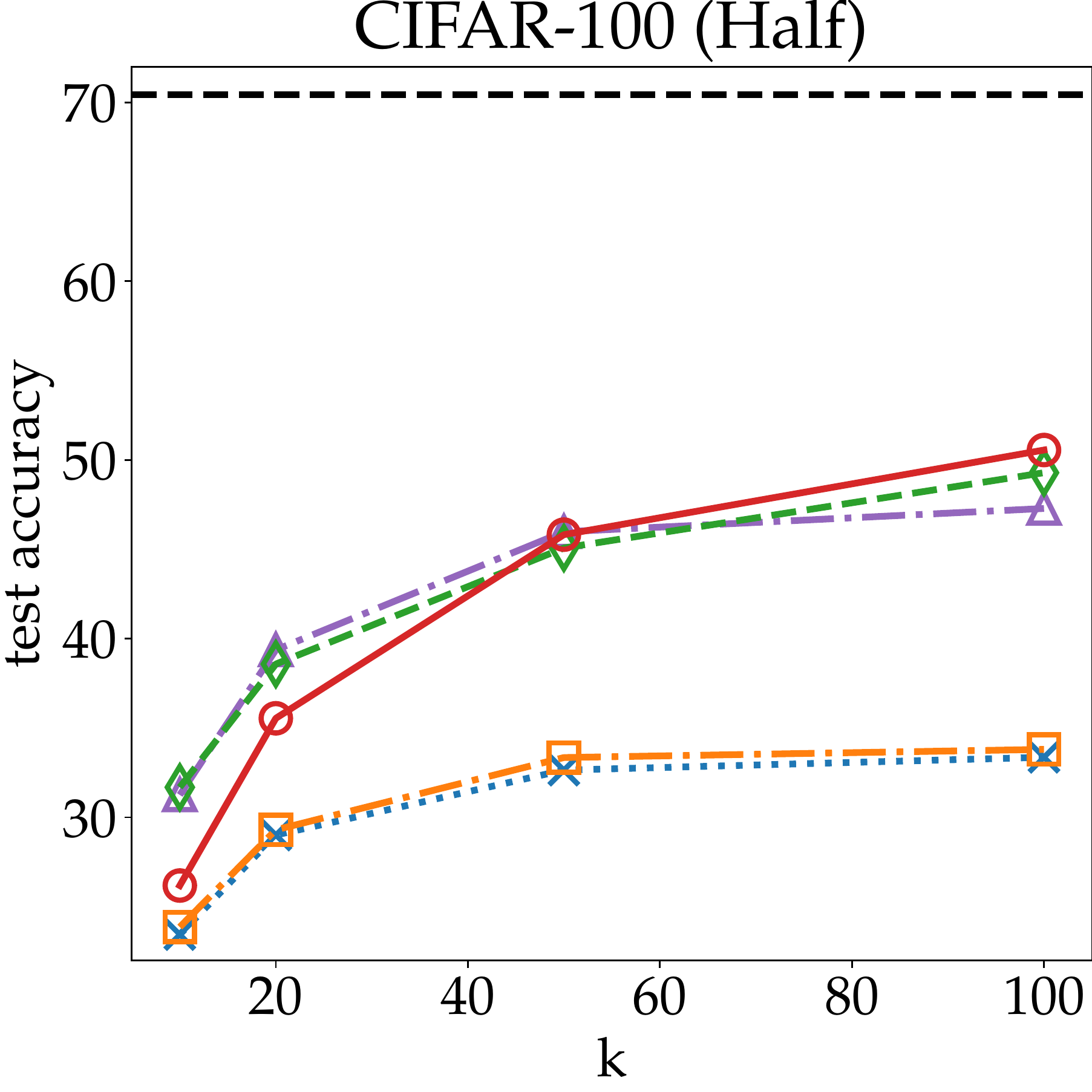}}
    \caption{Results of training different baselines, including soft-label teacher-student distillation and our BRL distillation approach on the full and $50\%$ randomly subsampled (half) CIFAR-10 and CIFAR-100 datasets. In each case, $+$\! Readout indicates using the teacher's pre-trained readout weights for the student.}
    \label{fig:cifar_small}
    \vspace{-0.6cm}
    \end{center}
\end{figure*}

The results of training the smaller student model are shown in Figure~\ref{fig:cifar_small}. We observe a significant improvement in accuracy using our BRL distillation approach across different numbers of components ($k$). For CIFAR-10, even with a small $k$, BRL shows a major advantage over soft-label teacher-student training. For CIFAR-100, these improvements are minor for smaller $k$ values as the leaky ReLU representations require a relatively larger number of components to be reconstructed efficiently compared to CIFAR-10 (see Figure~\ref{fig:cifar10_coeff}(c)-(d)). However, as $k$ increases, we observe that the added benefit from more Bregman representations outweighs the benefit of a higher capacity model (due to larger $\V$) for the baseline distillation approach using soft labels ($56.5\%$ top-1 accuracy for BRL compared to $54.1\%$ for Baseline Distillation).

Additionally, in Figure~\ref{fig:cifar_small}, we observe that the gap between BRL and soft-label distillation becomes even more prominent as the number of training examples decreases ($6.3\%$ gap for full vs. $9.0\%$ gap for half CIFAR-10 dataset). This shows that BRL is a more sample-efficient approach for transferring information between the teacher and the student models. Finally, in both cases, we see a marginal improvement for using the pre-trained readout layer weights over training the readout layer from scratch for larger $k$ values.

\vspace{-0.1cm}
\subsection{Distillation Results on ImageNet-1k}
We consider the problem of distilling a PreAct ResNet-50 teacher model trained on the ImageNet-1k dataset into a PreAct ResNet-18 student. Similar to the previous experiments, we replace the transfer function of the penultimate layer with a leaky ReLU with a negative slope of $\beta=1\mathrm{e}{-2}$. The teacher model is trained for 90 epochs using SGD with momentum with a piecewise constant schedule and achieves a $76.2\%$ top-1 test accuracy. The baseline ResNet-18 model using the same training procedure achieves a $70.1\%$ top-1 test accuracy. As the distillation baseline, we consider soft-label teacher-student training~\cite{HinVin15Distilling} and tune the soft-label ratio and the logit temperature, and train the model for 90 epochs using the same optimization hyperparameters as the ResNet-18 baseline. The soft-label distillation baseline achieves a $71.5\%$ top-1 test accuracy.

\begin{table}[h!]
\centering
\begin{tabular}{lcc}
\toprule
Model & Top-1 Accuracy & Top-5 Accuracy\\
\midrule
ResNet-50 Teacher & $76.2\%$ & $93.0\%$\\
\midrule
ResNet-18 Baseline & $70.1\%$ & $89.4\%$\\
ResNet-18 Soft-label Distillation~\cite{HinVin15Distilling} & $71.5\%$ & $90.4\%$\\
ResNet-18 BRL (Ours) & $\bm{72.6}\%$ & $\bm{91.1}\%$\\
\bottomrule
\end{tabular}
\caption{\label{tab:imagenet} ImageNet-1k distillation results using a ResNet-50 teacher and a ResNet-18 student. Each model is trained for $90$ epochs.}
\end{table}

The teacher's penultimate layer has dimension $d=\num{2048}$ which we compress into $k=512$ principal directions. To extract the Bregman representations and the mean vector for BRL, we simply apply our Bregman PCA algorithm in an online fashion in the last 10 epochs of the teacher model's training. (To find the mean, we apply an exponential moving average.) We then import and fix these representations into the student model. We replace the last ResNet-18 layer with a linear layer to produce the $512$ compression coefficients. We also use the weights of the teacher's readout layer at the student's output. We replace the optimizer with Adam~\cite{kingma2014adam} with a cosine learning rate schedule. Similar to soft-label distillation, we consider the teacher's soft labels with the same ratio as the distillation baseline. We also add a squared regularizer term between the teacher's target and the student's predicted compression coefficients. Our BRL method combined with the teacher's soft labels achieves a $72.6\%$ top-1 test accuracy after 90 epochs. The results are summarized in Table~\ref{tab:imagenet}.

\vspace{-0.1cm}
\section{Conclusion and Future Work}
We presented a new direction for knowledge distillation based on directly transferring the Bregman representations learned by the teacher model into the student model. Our work significantly advances the previous approaches for knowledge distillation which rely on either training using soft teacher labels or matching the representations of the teacher and the student in the intermediate layers. Our construction is more flexible and can be extended to training a cascade of student networks, in which each model constructs the representation that is passed as the input to the next model. Such training strategies are interesting future directions for our technique.

One limitation of our current approach is that it only applies to layers with strictly monotonic transfer functions. The extension of our work to non-monotonic transfer functions is an interesting open problem.

\bibliographystyle{plain}
\bibliography{refs}

\begin{thebibliography}{10}

\bibitem{loco}
Ehsan Amid, Rohan Anil, and Manfred~K. Warmuth.
\newblock {L}oco{P}rop: Enhancing {B}ack{P}rop via local loss optimization.
\newblock In {\em International Conference on Artificial Intelligence and
  Statistics (AISTATS)}, 2022.

\bibitem{2019TRIMAP}
Ehsan {Amid} and Manfred~K. {Warmuth}.
\newblock {TriMap: Large-scale Dimensionality Reduction Using Triplets}.
\newblock {\em arXiv preprint arXiv:1910.00204}, 2019.

\bibitem{bitemp}
Ehsan Amid, Manfred~K. Warmuth, Rohan Anil, and Tomer Koren.
\newblock Robust bi-tempered logistic loss based on {B}regman divergences.
\newblock In {\em Proceedings of the 32nd International Conference on Neural
  Information Processing Systems}, NeurIPS, 2019.

\bibitem{anil2018large}
Rohan Anil, Gabriel Pereyra, Alexandre Passos, Robert Ormandi, George~E Dahl,
  and Geoffrey~E Hinton.
\newblock Large scale distributed neural network training through online
  distillation.
\newblock {\em arXiv preprint arXiv:1804.03235}, 2018.

\bibitem{coda}
Marta Avalos, Richard Nock, Cheng~Soon Ong, Julien Rouar, and Ke~Sun.
\newblock Representation learning of compositional data.
\newblock In {\em Advances in Neural Information Processing Systems}, 2018.

\bibitem{beyer2021knowledge}
Lucas Beyer, Xiaohua Zhai, Am{\'e}lie Royer, Larisa Markeeva, Rohan Anil, and
  Alexander Kolesnikov.
\newblock Knowledge distillation: A good teacher is patient and consistent.
\newblock {\em arXiv preprint arXiv:2106.05237}, 2021.

\bibitem{beyer2022knowledge}
Lucas Beyer, Xiaohua Zhai, Am{\'e}lie Royer, Larisa Markeeva, Rohan Anil, and
  Alexander Kolesnikov.
\newblock Knowledge distillation: A good teacher is patient and consistent.
\newblock In {\em Proceedings of the IEEE/CVF Conference on Computer Vision and
  Pattern Recognition}, pages 10925--10934, 2022.

\bibitem{bregman}
Lev~M Bregman.
\newblock The relaxation method of finding the common point of convex sets and
  its application to the solution of problems in convex programming.
\newblock {\em USSR computational mathematics and mathematical physics},
  7(3):200--217, 1967.

\bibitem{ppca}
Julien Chiquet, Mahendra Mariadassou, and Stéphane Robin.
\newblock Variational inference for probabilistic poisson pca.
\newblock {\em The Annals of Applied Statistics}, 12:2674--2698, 2018.

\bibitem{bpca}
Michael Collins, Sanjoy Dasgupta, and Robert~E. Schapire.
\newblock A generalization of principal component analysis to the exponential
  family.
\newblock In {\em Proceedings of the 14th International Conference on Neural
  Information Processing Systems: Natural and Synthetic}, NIPS'01, page
  617–624, 2001.

\bibitem{czarnecki2017sobolev}
Wojciech~Marian Czarnecki, Simon Osindero, Max Jaderberg, Grzegorz
  {\'S}wirszcz, and Razvan Pascanu.
\newblock Sobolev training for neural networks.
\newblock {\em arXiv preprint arXiv:1706.04859}, 2017.

\bibitem{imagenet}
J.~Deng, W.~Dong, R.~Socher, L.-J. Li, K.~Li, and L.~Fei-Fei.
\newblock {ImageNet: A Large-Scale Hierarchical Image Database}.
\newblock In {\em CVPR09}, 2009.

\bibitem{GoluVanl96}
Gene~H. Golub and Charles~F. Van~Loan.
\newblock {\em Matrix Computations}.
\newblock The Johns Hopkins University Press, third edition, 1996.

\bibitem{Hatcher}
Allen Hatcher.
\newblock {\em {Algebraic topology}}.
\newblock Cambridge Univ. Press, Cambridge, 2000.

\bibitem{match}
D.~P. Helmbold, J.~Kivinen, and M.~K. Warmuth.
\newblock Relative loss bounds for single neurons.
\newblock {\em IEEE Transactions on Neural Networks}, 10(6):1291--1304, 1999.

\bibitem{HinVin15Distilling}
Geoffrey Hinton, Oriol Vinyals, and Jeff Dean.
\newblock Distilling the knowledge in a neural network.
\newblock {\em arXiv preprint arXiv:1503.02531}, 2015.

\bibitem{urruty}
Jean-Baptiste Hiriart-Urruty and Claude Lemaréchal.
\newblock {\em Fundamentals of Convex Analysis}.
\newblock Springer-Verlag Berlin Heidelberg, first edition, 2001.

\bibitem{kingma2014adam}
Diederik~P Kingma and Jimmy Ba.
\newblock Adam: A method for stochastic optimization.
\newblock {\em arXiv preprint arXiv:1412.6980}, 2014.

\bibitem{multidim}
J.~Kivinen and M.~K. Warmuth.
\newblock Relative loss bounds for multidimensional regression problems.
\newblock {\em Journal of Machine Learning}, 45(3):301--329, 2001.

\bibitem{logistic-pca}
Andrew~J. Landgraf and Yoonkyung Lee.
\newblock Dimensionality reduction for binary data through the projection of
  natural parameters.
\newblock {\em Journal of Multivariate Analysis}, 180:104668, 2020.

\bibitem{Landgraf2020GeneralizedPC}
Andrew~J. Landgraf and Yoonkyung Lee.
\newblock Generalized principal component analysis: Projection of saturated
  model parameters.
\newblock {\em Technometrics}, 62:459 -- 472, 2020.

\bibitem{lee2006riemannian}
John~M Lee.
\newblock {\em Riemannian manifolds: an introduction to curvature}, volume 176.
\newblock Springer Science \& Business Media, 2006.

\bibitem{muller2020subclass}
Rafael M{\"u}ller, Simon Kornblith, and Geoffrey Hinton.
\newblock Subclass distillation.
\newblock {\em arXiv preprint arXiv:2002.03936}, 2020.

\bibitem{oja1982simplified}
Erkki Oja.
\newblock Simplified neuron model as a principal component analyzer.
\newblock {\em Journal of mathematical biology}, 15(3):267--273, 1982.

\bibitem{Papernot2016DistillationAA}
Nicolas Papernot, Patrick Mcdaniel, Xi~Wu, Somesh Jha, and Ananthram Swami.
\newblock Distillation as a defense to adversarial perturbations against deep
  neural networks.
\newblock {\em 2016 IEEE Symposium on Security and Privacy (SP)}, pages
  582--597, 2016.

\bibitem{FRSLIIIOL}
Karl Pearson.
\newblock On lines and planes of closest fit to systems of points in space.
\newblock {\em Philosophical Magazine Series 1}, 2:559--572, 1901.

\bibitem{romero2014fitnets}
Adriana Romero, Nicolas Ballas, Samira~Ebrahimi Kahou, Antoine Chassang, Carlo
  Gatta, and Yoshua Bengio.
\newblock Fitnets: Hints for thin deep nets.
\newblock {\em arXiv preprint arXiv:1412.6550}, 2014.

\bibitem{roy2002exponential}
Nicholas Roy and Geoffrey~J Gordon.
\newblock Exponential family pca for belief compression in pomdps.
\newblock {\em Advances in Neural Information Processing Systems},
  15:1667--1674, 2002.

\bibitem{sanh2019distilbert}
Victor Sanh, Lysandre Debut, Julien Chaumond, and Thomas Wolf.
\newblock Distilbert, a distilled version of bert: smaller, faster, cheaper and
  lighter.
\newblock {\em arXiv preprint arXiv:1910.01108}, 2019.

\bibitem{shlens2014tutorial}
Jonathon Shlens.
\newblock A tutorial on principal component analysis.
\newblock {\em arXiv preprint arXiv:1404.1100}, 2014.

\bibitem{tian2019contrastive}
Yonglong Tian, Dilip Krishnan, and Phillip Isola.
\newblock Contrastive representation distillation.
\newblock {\em arXiv preprint arXiv:1910.10699}, 2019.

\bibitem{xie2020self}
Qizhe Xie, Minh-Thang Luong, Eduard Hovy, and Quoc~V Le.
\newblock Self-training with noisy student improves imagenet classification.
\newblock In {\em Proceedings of the IEEE/CVF Conference on Computer Vision and
  Pattern Recognition}, pages 10687--10698, 2020.

\bibitem{zagoruyko2016paying}
Sergey Zagoruyko and Nikos Komodakis.
\newblock Paying more attention to attention: Improving the performance of
  convolutional neural networks via attention transfer.
\newblock {\em arXiv preprint arXiv:1612.03928}, 2016.

\bibitem{zhang2017mixup}
Hongyi Zhang, Moustapha Cisse, Yann~N Dauphin, and David Lopez-Paz.
\newblock mixup: Beyond empirical risk minimization.
\newblock In {\em ICLR}, 2018.

\end{thebibliography}

\newpage
\appendix
\onecolumn
\section{Omitted Proofs}
\genmean*
\begin{proof}
Using the duality property of Bregman divergence, \pref{eq:gen-mean-obj} can be written as
\begin{equation*}
    \label{eq:gen-mean-obj-simple}
\min_{\mt\in\dom F}\big[\sum_i D_{F^*\!}(\x_i, f(\mt)) = \sum_i D_{F}(\mt, f^*(\x_i))\big]\, .
\end{equation*}
Applying \pref{eq:d-bregman} and setting the derivative of above wrt $\mt$ to zero, we have
\[
\sum_i \big(f(\m) - f(f^*(\x_i))\big) = \vert\Xc\vert\cdot f(\m) - \sum_i \x_i = \zero\, .
\]
Rearranging the term and applying the inverse yields the result.
\end{proof}
\genqr*
\begin{proof}
Since $\widetilde{\Qb}\Rb = \sqrt{\M}\A$, we have
\[
\Qb\Rb = \sqrt{\M^{-1}}\widetilde{\Qb}\Rb = \sqrt{\M^{-1}}\sqrt{\M}\A = \A\,,
\]
and $\Qb$ satisfies
\[
\Qb^\top \M \Qb = \widetilde{\Qb}^\top\sqrt{\M^{-1}}\M\sqrt{\M^{-1}}\widetilde{\Qb} = \widetilde{\Qb}^\top\widetilde{\Qb} = \I_m\, .
\]
\end{proof}
\qrsoftmax*
\begin{proof}
The first reflection $\Pb_1$ of the Householder method transforms the first column of the input matrix into a unit vector $s_1\e_1$ where $s_1 \in \R$. Thus, the first column of the resulting $\widetilde{\Q}$ term is just a rescaling of the first column of the input matrix. Thus, for the augmented matrix $[\one_m, \A]$, the first column of $\widetilde{\Q}$ corresponds to $r_1 \one_m$ with $r_1 \in \R$ s.t. $r_1^2 \one_m^\top \M\one_m = 1$. The remaining columns of $\widetilde{\Q}$, i.e., the matrix $\Q$, correspond to the column space of $\A$ minus the direction $\one_m$. Also, we have $\Q^\top\M\Q = \I_n$ and $\Q^\top \M \one_m = 0$. Note that the direction $\one_m$ is redundant for the softmax function, i.e. $\softmax(\Q\Rb) = \softmax(\A)$ where the softmax function is applied to each column and $\Rb$ corresponds to the upper-triangular matrix by removing the first row and column of $\widetilde{\Rb}$.
\end{proof}

\section{Full Bregman PCA with a Mean Algorithm}
\label{alg:full-pca}
We provide the full procedure for the Bregman PCA algorithm with mean, including the case of the softmax function.

\begin{center}
\begin{algorithm}[H]
\caption{Bregman PCA with a Mean (Including Softmax)}\label{alg:full-pca}
\begin{algorithmic}
    \State \textbf{Input:} $\Xc = \{\x_i \in \dom F^*\}$, number of components $k$
    \State \textbf{Output:} dual mean $\m \in \dom F$, $\V \in \St^{\text{\scalebox{0.8}{$(\H_{\!F}(\m))$}}}_{d,k}$, $\{\c_i \in \R^k\}$
    \State $\m \gets f^*\big(\frac{1}{\vert\Xc\vert}\sum_i \x_i\big)$
    \State initialize $\V$ and $\{\c_i\}$
\Repeat
    \For{\,$i \in [\vert\Xc\vert]$\,}
        \State $\xh_i \gets f(\m + \V\c_i)$\vspace{0.05cm}
        \State $\c_i \gets \c_i - \eta_a\V^\top(\xh_i - \x_i)$
    \EndFor
    \State $\V \gets \V - \eta_{\scalebox{0.5}{$V$}}\sum_i (\xh_i - \x_i)\c_i^\top$\vspace{0.05cm}
    \vspace{0.1cm}
\Until{\,\,$\V, \{\c_i\}$ not converged}\,\,
\If{$f$ is $\softmax$}
    \State $\V \gets [\one_d, \V]$ \Comment{Augment a column of all ones.}
    \State $\V, \T \gets \GQR(\V, \H_{F}([0; \m]))$ \Comment{Augment a zero element to the mean vector.}
    \Else
    \State $\V, \T \gets \GQR(\V, \H_{F}(\m))$
    \EndIf
\If{$f$ is $\softmax$}
         \State $\V \gets \V[:, 1\!:]$ \Comment{Drop the first column.}
         \State $\T \gets \T[1\!:, 1\!:]$ \Comment{Drop the first row and column.}
        \EndIf
\State \textbf{return} $\m$, $\V$, $\{\T\c_i\}$
\end{algorithmic}
\end{algorithm}
\end{center}

\section{Further Details on the Experiments}
\textbf{CIFAR-10/100 Experiments:} For each experiment, we tune the learning rate and the momentum hyperparameters in the range $[10^{-5}, 0.2]$ and $[0.5, 0.99]$, respectively. The results are averaged over $5$ independent trials for each experiment. The input images are normalized for each dataset by subtracting the mean and dividing by the standard deviation over the whole training set.

\textbf{Compute Resources:} For the CIFAR-10/100 experiments, we use V-100 GPUs. For the ImageNet-1k experiments, we use $8\times 8$ TPU v3s.

\section{Experimental Results with Data Augmentation and Mixup}
Data augmentation is a standard procedure for training deep neural networks. However, our main results in the paper focus on decoupling the information transfer from the teacher due to the knowledge distillation technique from the generalization improvement due to data augmentation. In this section, we showcase results using standard augmentation procedures (random horizontal flip and random crop) combined with Mixup~\cite{zhang2017mixup} with $\alpha=\beta=1$. Table~\ref{tab:aug} presents the results of different methods with $k=100$ components and $51$ training epochs with a batch size of $128$.

\begin{table}[h]
\caption{Top-1 accuracy for different methods with number of components $k=100$ and using data augmentation combined with Mixup~\cite{zhang2017mixup}.} \label{tab:aug}
\begin{center}
\begin{tabular}{lcc}
\toprule
\multirow{2}{3.5cm}{\textbf{Method}}& \multicolumn{2}{p{5cm}}{\centering \textbf{Datset}} \\
\cline{2-3} & \multicolumn{1}{c}{\textbf{CIFAR-10}} & \multicolumn{1}{c}{\textbf{CIFAR-100}}\\ \midrule
Baseline & $83.65 \pm 0.41$ & $54.25 \pm 0.35$ \\
Baseline + Readout & $83.34 \pm 0.51$ & $55.11 \pm 0.34$ \\
Baseline Distillation & $86.00 \pm 0.22$ & $62.25 \pm 0.15$ \\
BRL & $\bm{87.03 \pm 0.12}$ & $\bm{62.72 \pm 0.28}$ \\
BRL + Readout & $86.24 \pm 0.29$ & $61.11 \pm 0.29$\\\bottomrule
\end{tabular}
\end{center}
\end{table}

From the table, we see that the performance of all methods consistently improves with data augmentation. Additionally, among all methods, BRL performs the best. However, the difference between BRL and soft-label knowledge distillation is not as prominent as before (see Figure~\ref{fig:cifar_small}). Nonetheless, such behavior is not surprising as we expect soft-label knowledge distillation and our method to eventually reach the teacher's performance, given that a sufficient amount of data (i.e., training examples) for distillation is provided~\cite{beyer2022knowledge}. As we show in the main experiments, BRL delivers a clear advantage in a setup with a limited number of samples.

\end{document}